\documentclass[letterpaper, 10 pt, conference]{ieeeconf}  %

\IEEEoverridecommandlockouts                              %

\usepackage{url}
\usepackage[hidelinks]{hyperref}

\usepackage[
  backend=biber,
  style=numeric-comp,   %
  sortcites=true,
  sorting=none,         %
  natbib=true,          %
  giveninits=true,
  maxcitenames=4,       %
  minbibnames=3,        %
  maxbibnames=4,        %
  doi=false,            %
  url=false,
  hyperref=true
]{biblatex}
\DeclareSourcemap{
  \maps[datatype=bibtex]{
    \map{
      \step[fieldset=file,      null]
      \step[fieldset=abstract,  null]
      \step[fieldset=keywords,  null]
      \step[fieldset=urldate,   null]
      \step[fieldset=month,     null]
      \step[fieldset=day,       null]
      \step[fieldset=language,  null]
      \step[fieldset=isbn,      null]
      \step[fieldset=issn,      null]
      \step[fieldsource=doi,
            match=\regexp{^\s*https?://(dx\.)?doi\.org/},
            replace={}]
      \step[fieldsource=booktitle,
            match=\regexp{^\s*Proceedings\s+of\s+(the\s+)?},
            replace={}]
      \step[fieldsource=eventtitle,
            match=\regexp{^\s*Proceedings\s+of\s+(the\s+)?},
            replace={}]
      \step[fieldsource=booktitle,
            match=\regexp{^\s*(19|20)\d{2}\s+},
            replace={}]
      \step[fieldsource=eventtitle,
            match=\regexp{^\s*(19|20)\d{2}\s+},
            replace={}]
      \step[fieldsource=booktitle,
            match=\regexp{^\s*\d+(st|nd|rd|th)\s+},
            replace={}]
      \step[fieldsource=eventtitle,
            match=\regexp{^\s*\d+(st|nd|rd|th)\s+},
            replace={}]
    }
  }
}
\renewbibmacro{in:}{}

\DeclareFieldFormat{title}{\mkbibemph{#1}}
\DeclareFieldFormat[article,incollection,techreport,inproceedings,book,misc]{title}{\mkbibquote{#1}}

\DeclareFieldFormat{eprint:arxiv}{arXiv\addcolon\space#1}


\usepackage{amsthm}
\newtheorem{theorem}{Theorem}
\newtheorem{lemma}{Lemma}
\newtheorem{proposition}{Proposition}
\newtheorem{corollary}{Corollary}
\newtheorem{definition}{Definition}
\newtheorem{remark}{Remark}

\usepackage[dvipsnames]{xcolor}
\definecolor{porange}{HTML}{E77500} %
\usepackage{amsmath}
\usepackage{amssymb}
\usepackage{amsfonts}
\usepackage{mathtools}
\usepackage{graphicx}
\usepackage{enumerate}
\usepackage{bm}
\usepackage[xindy,acronym,nowarn]{glossaries}
\usepackage{fontawesome5}

\usepackage{eso-pic}

\newcommand{\PreprintNotice}{%
  \AddToShipoutPictureFG*{%
    \AtPageLowerLeft{%
      \raisebox{28pt}[0pt][0pt]{%
        \makebox[\paperwidth][c]{%
          \footnotesize
          This work has been submitted to the IEEE for possible publication. Copyright may be transferred without notice, after which this version may no longer be accessible.%
        }%
      }%
    }%
  }%
}

\definecolor{porange}{HTML}{E77500} %
\definecolor{grey}{HTML}{919191} %

\DeclareDocumentEnvironment{example}{}{\noindent\textbf{Running example:}\itshape}{}

\usepackage{ifthen}
\newboolean{include-notes}
\newboolean{highlight-new}
\newboolean{include-remove}
\setboolean{include-notes}{true}
\setboolean{highlight-new}{true}
\setboolean{include-remove}{true}

\usepackage[dvipsnames]{xcolor}
\usepackage[normalem]{ulem}
\newcommand{\jaime}[1]{\ifthenelse{\boolean{include-notes}}{\textcolor{orange}{\textbf{Jaime:} #1}}{}}
\newcommand{\haimin}[1]{\ifthenelse{\boolean{include-notes}}{\textcolor{magenta}{\textbf{Haimin:} #1}}{}}
\newcommand{\justin}[1]{\ifthenelse{\boolean{include-notes}}{\textcolor{Cerulean}{\textbf{Justin:} #1}}{}}
\newcommand{\himani}[1]{\ifthenelse{\boolean{include-notes}}{\textcolor{Plum}{\textbf{Nishanth:} #1}}{}}
\newcommand{\david}[1]{\ifthenelse{\boolean{include-notes}}{\textcolor{teal}{\textbf{David:} #1}}{}}
\newcommand{\duy}[1]{\ifthenelse{\boolean{include-notes}}{\textcolor{blue}{\textbf{Duy:} #1}}{}}
\newcommand{\remove}[1]{\ifthenelse{\boolean{include-remove}}{\textcolor{red}{\sout{#1}}}{}}
\newcommand{\new}[1]{\ifthenelse{\boolean{highlight-new}}{\textcolor{blue}{#1}}{#1}}
\newcommand{\todo}[1]{\ifthenelse{\boolean{include-notes}}{\textcolor{blue}{\textbf{TODO:} #1}}{}}
\newcommand{\guy}[1]{\ifthenelse{\boolean{include-notes}}{\textcolor{Cerulean}{\textbf{Guy:} #1}}{}}
\newcommand{\flag}[1]{\ifthenelse{\boolean{include-notes}}{\textcolor{red}{#1}}{#1}}

\newcommand{\princeton}[1]{\ifthenelse{\boolean{include-notes}}{\textcolor{orange}{#1}}{}}

\newcommand{\p}[1]{\smallskip \noindent \textbf{{#1}.}}

\newcommand{\sRAS}{\textbf{\textcolor{porange}{sRAS}}}
\newcommand{\RA}{\textbf{\textcolor{black}{RA}}}
\newcommand{\AO}{\textbf{\textcolor{grey}{Avoid-only}}}

\newcommand{\ie}{\textit{i.e.}}

\newcommand{\failureset}{{\mathcal{F}}}

\newcommand{\task}{{\text{task}}}

\newcommand{\shield}{\text{\tiny{\faShield*}}}

\newcommand{\policyTask}{{\policy^\task}}

\newcommand{\safetyFilter}{{\phi}}

\newcommand{\reals}{\mathbb{R}}

\newcommand{\compl}{\mathsf{c}}

\DeclareMathOperator{\interior}{int}

\DeclareMathOperator*{\argmax}{arg\,max}
\DeclareMathOperator*{\argmin}{arg\,min}

\newcommand{\state}{{x}}

\newcommand{\ctrl}{{u}}

\newcommand{\dstb}{{d}}
\newcommand{\traj}{{\mathbf{x}}}%

\newcommand{\xset}{{\mathcal{X}}}  %

\newcommand{\cset}{{\mathcal{U}}}
\newcommand{\dset}{{\mathcal{D}}}

\newcommand{\dyn}{{f}}

\newcommand{\margin}{{g}}

\newcommand{\policy}{{\pi}}
\newcommand{\cpolicy}{\pi^\ctrl}
\newcommand{\dpolicy}{\pi^\dstb}

\newcommand{\stagecostsafe}{{\ell}}

\newcommand{\modeset}{\mathcal{M}}
\newcommand{\mode}{m}
\newcommand{\reachmode}{\mathrm R}
\newcommand{\staymode}{\mathrm S}

\newcommand{\targetset}{\mathcal{T}}
\newcommand{\staymargin}{g_{\mathrm S}}
\newcommand{\stayfailureset}{\mathcal{F}_{\mathrm S}}
\newcommand{\reachtargetset}{\mathcal{T}_{\mathrm R}}
\newcommand{\reachtargetmargin}{\ell_{\mathrm R}}
\newcommand{\reachmargin}{g_{\mathrm R}}
\newcommand{\reachfailureset}{\mathcal{F}_{\mathrm R}}
\newcommand{\stayval}{V_{\mathrm S}}
\newcommand{\reachval}{V_{\mathrm R}}
\newcommand{\stayq}{\mathcal{Q}_{\mathrm S}}
\newcommand{\reachq}{\mathcal{Q}_{\mathrm R}}
\newcommand{\stayset}{\Omega_{\mathrm S}^{*}}

\newcommand{\srasset}{\Omega_{\mathrm{sRAS}}^{*}}
\newcommand{\raset}{\Omega_{\mathrm{RA}}^{*}}
\newcommand{\stayfallback}{\policy^{\shield}_{\mathrm S}}
\newcommand{\reachfallback}{\policy^{\shield}_{\mathrm R}}
\newcommand{\classK}{\mathcal{K}}

\newglossarystyle{mylist}{%
  \setglossarystyle{list}%
}

\glsdisablehyper
\makeglossaries

\newglossaryentry{GDA}
{
  name={GDA},
  description={gradient descent-ascent},
  first={gradient descent-ascent (\glsentrytext{GDA})}
}

\newglossaryentry{RL}
{
  name={RL},
  description={reinforcement learning},
  first={reinforcement learning (\glsentrytext{RL})}
}

\newglossaryentry{HJ}
{
  name={HJ},
  description={Hamilton--Jacobi},
  first={Hamilton--Jacobi (\glsentrytext{HJ})}
}

\newglossaryentry{HJI}
{
  name={HJI},
  description={Hamilton--Jacobi--Isaacs},
  first={Hamilton--Jacobi--Isaacs (\glsentrytext{HJI})}
}

\newglossaryentry{DCBF}
{
  name={DCBF},
  plural={DCBFs},
  description={discrete-time control barrier function},
  first={discrete-time control barrier function (\glsentrytext{DCBF})},
  firstplural={discrete-time control barrier functions (\glsentryplural{DCBF})}
}

\newglossaryentry{CBF}
{
  name={CBF},
  plural={CBFs},
  description={control barrier function},
  first={control barrier function (\glsentrytext{CBF})},
  firstplural={control barrier functions (\glsentryplural{CBF})}
}

\newglossaryentry{Q-CBF}
{
  name={Q-CBF},
  plural={Q-CBFs},
  description={state--action control barrier function},
  first={state--action control barrier function (\glsentrytext{Q-CBF})},
  firstplural={state--action control barrier functions (\glsentryplural{Q-CBF})}
}

\newglossaryentry{HCBF}
{
  name={HCBF},
  plural={HCBFs},
  description={hybrid control barrier function},
  first={hybrid control barrier function (\glsentrytext{HCBF})},
  firstplural={hybrid control barrier functions (\glsentryplural{HCBF})}
}

\newglossaryentry{VB-CBF}
{
  name={VB-CBF},
  plural={VB-CBFs},
  description={viscosity-based control barrier function},
  first={viscosity-based control barrier function (\glsentrytext{VB-CBF})},
  firstplural={viscosity-based control barrier functions (\glsentryplural{VB-CBF})}
}

\newglossaryentry{DDPG}
{
  name={DDPG},
  description={deep deterministic policy gradient},
  first={deep deterministic policy gradient (\glsentrytext{DDPG})}
}

\newglossaryentry{ODD}
{
  name={ODD},
  description={Operational Design Domain},
  first={operational design domain (\glsentrytext{ODD})}
}

\newglossaryentry{LRSF}
{
  name={LRSF},
  description={Least-Restrictive Safety Filter},
  first={least-restrictive safety filter (\glsentrytext{LRSF})}
}

\newglossaryentry{HCSF}
{
  name={HCSF},
  description={Human-Centered Safety Filter},
  first={human-centered safety filter (\glsentrytext{HCSF})}
}

\newglossaryentry{NLP}
{
  name={NLP},
  description={nonlinear programming problem},
  first={nonlinear programming problem (\glsentrytext{NLP})},
}

\newglossaryentry{ILQR}
{
  name={ILQR},
  description={iterative linear quadratic regulator},
  first={iterative linear quadratic regulator (\glsentrytext{ILQR})},
}

\newglossaryentry{MPC}
{
  name={MPC},
  description={model predictive control},
  first={model predictive control (\glsentrytext{MPC})},
}

\newglossaryentry{AC}
{
  name={AC},
  description={Assetto Corsa},
  first={Assetto Corsa (\glsentrytext{AC})},
}

\newglossaryentry{SAC}
{
  name={SAC},
  description={Soft Actor--Critic},
  first={soft actor--critic (\glsentrytext{SAC})},
}

\newglossaryentry{ANOVA}
{
    name={ANOVA},
    description={Analysis of Variance},
    first={analysis of variance (\glsentrytext{ANOVA})}
}

\newglossaryentry{SME}
{
    name={SME},
    description={Simple Main Effects},
    first={simple main effects (\glsentrytext{SME})},
}

\newglossaryentry{HSD}
{
    name={HSD},
    description={Honestly Significant Difference},
    first={honestly significant difference (\glsentrytext{HSD})},
}

\newglossaryentry{ECDF}
{
    name={ECDF},
    description={Empirical Cumulative Distribution Function},
    first={empirical cumulative distribution function (\glsentrytext{ECDF})},
}

\newglossaryentry{OCP}
{
    name={OCP},
    description={Optimal Control Problem},
    first={optimal control problem (\glsentrytext{OCP})}
}

\newglossaryentry{AI}
{
    name={AI},
    description={Artificial Intelligence},
    first={artificial intelligence (\glsentrytext{AI})}
}

\newglossaryentry{HRI}
{
    name={HRI},
    description={Human--Robot Interaction},
    first={human--robot interaction (\glsentrytext{HRI})}
}

\newglossaryentry{MDP}
{
    name={MDP},
    description={Markov Decision Process},
    first={Markov decision process (\glsentrytext{MDP})}
}

\newglossaryentry{SC-MDP}
{
    name={SC-MDP},
    description={Safety-Critical Markov Decision Process},
    first={safety-critical Markov decision process (\glsentrytext{SC-MDP})}
}

\newacronym[longplural={constrained Markov decision processes}]{CMDP}{CMDP}{constrained Markov decision process}

\newglossaryentry{DP}
{
    name={DP},
    description={Dynamic Programming},
    first={dynamic programming (\glsentrytext{DP})}
}

\newglossaryentry{CPO}
{
    name={CPO},
    description={Constrained Policy Optimization},
    first={Constrained Policy Optimization (\glsentrytext{CPO})}
}

\newglossaryentry{RCPO}
{
    name={RCPO},
    description={Reward Constrained Policy Optimization},
    first={Reward Constrained Policy Optimization (\glsentrytext{RCPO})}
}

\newglossaryentry{RAS}
{
    name={RAS},
    description={Reach--Avoid--Stay},
    first={reach--avoid--stay (\glsentrytext{RAS})}
}

\newglossaryentry{sRAS}
{
    name={sRAS},
    description={strict Reach--Avoid--Stay},
    first={strict reach--avoid--stay (\glsentrytext{sRAS})}
}

\newglossaryentry{RA}
{
    name={RA},
    description={Reach--Avoid},
    first={reach--avoid (\glsentrytext{RA})}
}

\newacronym{MPSF}{MPSF}{model predictive safety filter}

\newacronym{ISAACS}{ISAACS}{Iterative Soft Adversarial Actor--Critic for Safety}

\newacronym{QP}{QP}{quadratic program}

\newacronym[longplural={Gaussian processes}]{GP}{GP}{Gaussian process}

\newacronym{CVaR}{CVaR}{conditional value-at-risk}

\newglossaryentry{STL}
{
    name={STL},
    description={signal temporal logic},
    first={signal temporal logic (\glsentrytext{STL})}
}

\newglossaryentry{CB-VF}
{
    name={CB-VF},
    plural={CB-VFs},
    description={control barrier--value function},
    first={control barrier--value function (\glsentrytext{CB-VF})},
    firstplural={control barrier--value functions (\glsentryplural{CB-VF})}
}

\usepackage{capt-of} %

\title{\LARGE \bf
Winning a Won Game: Strict Reach--Avoid--Stay Control Barrier Functions for High-Dimensional Black-Box Systems
}

\author{Donggeon David Oh$^{1}$, Duy P.~Nguyen$^{1}$, Gongkai Yuan$^{2}$,\\
Qingchen Li$^{2}$, Jaime Fern\'andez Fisac$^{1,\dagger}$, and Haimin Hu$^{2,\dagger}$%
\thanks{$^{1}$D. D. Oh, D. P. Nguyen, and J. F. Fisac are with the Department of Electrical and Computer Engineering, Princeton University, Princeton, NJ 08544, USA.
{\tt\small \{do9948,duyn,jfisac\}@princeton.edu}}%
\thanks{$^{2}$G. Yuan, Q. Li, and H. Hu are with the Department of Computer Science, Johns Hopkins University, Baltimore, MD 21218, USA.
{\tt\small \{gyuan3,qli137\}@jh.edu, haimin@cs.jhu.edu}}%
\thanks{$^{\dagger}$Equal advising.}%
}

\begin{document}

\makeatletter
\let\@oldmaketitle\@maketitle
\renewcommand{\@maketitle}{%
  \@oldmaketitle%
  \begin{minipage}{\textwidth}
  \setcounter{figure}{0}
\centering
\includegraphics[width=\textwidth]{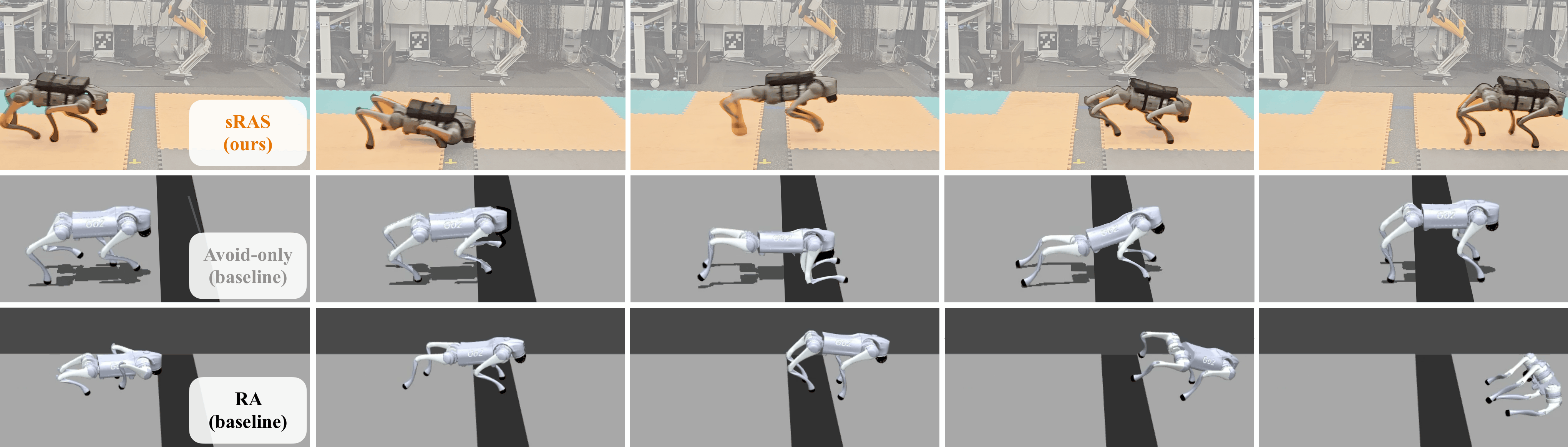}%

\captionof{figure}{A Unitree Go2 quadruped must jump across a $25\,\mathrm{cm}$ gap, land safely on the far side, and remain safe afterward.
This is a strict reach--avoid--stay (\sRAS) task, where a robot must safely reach a target and remain safely within it indefinitely after first target entry.
All methods use the same task policy, which is a pure-pursuit policy that walks in the right direction.
Sequences progress from left to right.
\textbf{Top (hardware):} With our \sRAS~$Q$-control barrier function (CBF) safety filter, the robot builds momentum, crouches for takeoff, jumps across the gap, lands safely, and remains safe. The same successful behavior is observed in simulation.
\textbf{Middle (simulation):} The \AO~safety filter does not encode reaching the far side; the robot fails to complete the jump, leaving its hind legs on the starting side.
\textbf{Bottom (simulation):} The reach--avoid (\RA) safety filter encodes reaching the target safely but not remaining safe afterward; the robot jumps across the gap but fails to land safely.
The successful hardware execution illustrates the intended behavior of our \sRAS~$Q$-CBF filter: modifying task actions to keep the robot in \sRAS~winnable states and maintain safety within the target after first target entry.}
\label{fig:go2_gap_hardware}
  \end{minipage}\par\vspace{-0.3cm}
}
\makeatother

\maketitle
\thispagestyle{empty}
\pagestyle{empty}

\PreprintNotice

\begin{abstract}
Robots must complete their tasks and maintain the achieved outcomes while avoiding safety failures at all times.
Strict reach--avoid--stay (sRAS) formalizes this requirement: safely reaching a target and remaining there indefinitely after first entry.
We propose an sRAS $Q$-control barrier function (CBF) safety filter for high-dimensional black-box systems under bounded uncertainty.
Our construction combines a stay value encoding safe permanent residence in a target subset with a reach--avoid value encoding safe reachability of this subset while avoiding target states from which safe permanent residence cannot be guaranteed.
We prove that these values jointly yield a valid robust discrete-time CBF and lift them to state--action $Q$-functions for runtime intervention.
For exact values and under a measure-zero condition, our filter preserves sRAS feasibility from almost every winnable initial state and keeps the system safely within the target after first entry, against all admissible uncertainty realizations.
We adopt reachability-based adversarial reinforcement learning for scalable value approximation using only black-box interactions.
Notably, neither synthesis nor deployment of our filter requires known dynamics, affine structure, value derivatives, or hand-designed barriers.
We validate our framework in quadruped gap jumping in simulation and hardware, where the robot crosses the gap, lands safely, and remains safe afterward.
Simulated F1TENTH races further demonstrate safe overtaking and lead retention.
\end{abstract}

\vspace{0.1cm}
{\small\itshape
\noindent The hardest game to win is a won game.\par
\vspace{0.1cm}
\raggedleft\upshape\textsc{Emanuel Lasker, World Chess Champion}
\par}

\section{Introduction}
\label{sec:introduction}
As robots take on increasingly complex tasks in the real world, they must operate safely under uncertainty.
\emph{Safety filters} address this need by monitoring operation at runtime and intervening when necessary by modifying proposed controls to prevent catastrophic failures~\cite{hsu2024safety}.
\emph{\Gls{CBF}}-based filters are a popular choice because they can provide smooth safety interventions~\cite{hsu2024safety,oh2025safety,oh2026synthesis}.

However, safety interventions should also \emph{preserve the ability to complete the task and maintain the achieved outcome}.
For example, a quadruped tasked with crossing a gap should build momentum, jump, land safely, and remain safe after touchdown, rather than stay on the starting side or jump only to fall after touchdown (Figure~\ref{fig:go2_gap_hardware}).
In car racing, the car should overtake safely and retain the lead, rather than remain behind an opponent or get overtaken again (Figure~\ref{fig:f1_overtaking_circuit}).
We formalize this requirement as \emph{\gls{sRAS}}, which requires safely reaching a predefined target and remaining there indefinitely after the first entry.

We consider \emph{high-dimensional systems} under bounded uncertainty with only \emph{black-box access to their dynamics}.
\emph{Our goal is to devise a \gls{CBF}-based safety filter for such systems that preserves \gls{sRAS} feasibility, \ie, keeps the system in states from which the specification remains achievable.}

Existing work extends \glspl{CBF} to encode similar temporal specifications through time-varying \glspl{CBF}~\cite{lindemann2018control,das2025control,das2025controlb} and control Lyapunov--barrier methods~\cite{meng2021control,zhou2026patching}, but these approaches rely on known control-affine dynamics, preventing their direct application to black-box systems.
On the other hand, reachability-based \gls{RL} enables scalable value learning for complex temporal specifications through black-box interaction~\cite{hsu2021safety,nguyen2025gameplay,chenevert2024solving,sharpless2026dual,sharpless2026bellman,so2026value}.
In parallel, exact reachability values are valid \glspl{CBF} in a generalized sense, with runtime filtering requiring known dynamics and value derivatives~\cite{choi2021robust,hirsch2026viscosity,begzadic2025back}, while learned values have been used as \glspl{DCBF} for avoid-only tasks on black-box systems~\cite{oh2025safety,oh2026synthesis}.
We review these approaches and compare them with our work in \autoref{sec:related_work}.

Building upon these works, we propose the \emph{\gls{sRAS} $Q$-\gls{CBF} safety filter}.
Our main contributions are as follows:

\noindent \textbf{\glsentrytext{sRAS} \glsentrytext{Q-CBF} theory.}
We propose the \emph{\gls{sRAS} $Q$-\gls{CBF} safety filter} for \emph{high-dimensional black-box systems under bounded uncertainty}.
We prove that the \emph{stay value}, encoding safe permanent residence in a target subset, and the \emph{reach--avoid value}, encoding reachability of this subset while avoiding failure states and target states from which safe permanent residence cannot be guaranteed, jointly yield a valid robust \gls{DCBF}.
For exact values and under a measure-zero condition, our filter 1) \emph{is recursively feasible}, 2) \emph{preserves \gls{sRAS} feasibility from almost every winnable initial state}, and 3) \emph{keeps the system safely within the target indefinitely after first entry} against all admissible uncertainty realizations.

\noindent \textbf{Scalable black-box synthesis and deployment.}
We adopt reachability-based adversarial \gls{RL} for scalable synthesis of stay and reach--avoid values and their state--action lifts using only black-box interactions.
The lifted values, together with the learned worst-case response disturbance policies, enable direct evaluation of $Q$-\gls{CBF} constraints for candidate controls.
Thus, \emph{neither synthesis nor deployment requires known dynamics, control- or disturbance-affine structure, value derivatives, or hand-designed barrier candidates}.

\noindent \textbf{Robotic evaluation.}
We evaluate our \gls{sRAS} $Q$-\gls{CBF} safety filter on two robotic tasks: 1) quadruped gap jumping in simulation and hardware, where the robot crosses the gap, lands safely, and remains safe afterward, and 2) simulated F1TENTH endurance racing, where the ego vehicle overtakes opponents safely and retains the lead.

\section{Related Work}
\label{sec:related_work}

\p{\glspl{CBF} for temporal specifications}
Prior work extends \glspl{CBF} beyond failure avoidance to encode temporal specifications.
Time-varying \glspl{CBF} are often adopted to capture the semantics of \gls{STL} tasks~\cite{lindemann2018control}.
Das et al.~\cite{das2025control,das2025controlb} synthesize spatiotemporal tubes for \gls{STL} tasks, including prescribed-time \gls{RAS}, and derive corresponding time-varying \glspl{CBF}.
Control Lyapunov--barrier methods that induce safe stabilization enable the synthesis of \gls{RAS} controllers~\cite{meng2021control} and conservative approximations of local winning sets for decomposed temporal logic tasks~\cite{zhou2026patching}.
These approaches require known control-affine dynamics for controller synthesis and deployment.
By contrast, our \gls{sRAS} $Q$-\gls{CBF} framework enables end-to-end synthesis and deployment on high-dimensional black-box systems while preserving the ability to satisfy the \gls{sRAS} specification from almost every state in its maximal robust winning set under a measure-zero assumption.

\p{Reachability analysis for temporal specifications}
Decomposing complex temporal logic specifications into reachability subproblems has been adopted for devising least-restrictive feasible controller sets for \gls{STL}~\cite{chen2018signal} and constructing temporal logic trees for online control synthesis~\cite{gao2021temporal}.
Recent advances in reachability-based \gls{RL} enable scalable solutions to these subproblems.
Time-discounted Bellman equations enable neural approximation of reachability values via \gls{RL}~\cite{fisac2019bridging,hsu2021safety}, and extensions to uncertain dynamics yield a predictive safety filter based on rollouts under disturbance~\cite{nguyen2025gameplay}.
These methods support learning optimal policies~\cite{sharpless2026dual,sharpless2026bellman} and constructing least-restrictive safety filters~\cite{so2026value} for complex temporal logic tasks decomposed into graphs of reachability subproblems.
\gls{RAS} has also been decomposed into avoid and reach--avoid subproblems, with corresponding policies learned via \gls{RL}~\cite{chenevert2024solving}.
Our work similarly decomposes the \gls{sRAS} specification into two reachability subproblems and uses reachability-based \gls{RL} to learn their values through black-box interactions.
However, we differ from this line of work in that we devise a \gls{CBF} safety filter that preserves \gls{sRAS} feasibility under bounded uncertainty.

\p{Reachability-derived \glspl{CBF}}
In continuous-time systems, the infinite-horizon avoid value is known to be a valid \gls{CBF} when continuously differentiable~\cite{hsu2024safety}.
Robust \glspl{CB-VF}, constructed as viscosity solutions of a modified Hamilton--Jacobi--Isaacs variational inequality, enable \gls{CBF}-like filtering~\cite{choi2021robust}, while viscosity \glspl{CBF} that satisfy the \gls{CBF} condition in the viscosity sense are equivalent to time-invariant \glspl{CB-VF} if nonnegative~\cite{hirsch2026viscosity}.
A finite-horizon reach--avoid value is also shown to be a valid time-varying viscosity-based \gls{CBF}~\cite{begzadic2025back}.
Although reachability values can recover maximal winning sets, runtime evaluation of the corresponding \gls{CBF} constraints requires known dynamics and derivatives of the value or a viscosity test function.
In discrete time, avoid values yield valid \glspl{DCBF}, with the lifted value function enabling scalable black-box synthesis and deployment~\cite{oh2025safety,oh2026synthesis}.
However, these avoid-only values do not by themselves encode task-completion feasibility for temporal specifications.
Our \gls{sRAS} $Q$-\gls{CBF} enables black-box synthesis and deployment without known dynamics, control- or disturbance-affine structure, or value derivatives while preserving the ability to satisfy the \gls{sRAS} specification.

\section{Preliminaries and Problem Formulation}
\label{sec:problem}

\subsection{Strict Reach--Avoid--Stay under Bounded Uncertainty}

We consider a nonlinear discrete-time system
\begin{equation}
\label{eq:dynamics}
\state_{t+1}=\dyn(\state_t,\ctrl_t,\dstb_t),
\end{equation}
where $\state_t\in\xset\subseteq\mathbb R^n$ is the state, $\ctrl_t\in\cset$ is the control input, and $\dstb_t\in\dset$ is an unknown but bounded disturbance input representing predictive uncertainty.
We adopt a black-box viewpoint in which the transition mechanism in~\eqref{eq:dynamics} can be queried for the next state given the current state, control, and disturbance, but need not be available in closed form or possess control- or disturbance-affine structure.

We consider \gls{sRAS} tasks, with a \emph{failure set} $\failureset\subset\xset$ containing unacceptable states that must be avoided at all times, and a \emph{target set} $\targetset\subset\xset$ containing states that the system should reach and remain within after its first entry.
Throughout, robust satisfaction requires a single control policy $\cpolicy:\xset\to\cset$ to satisfy the specification against every disturbance policy $\dpolicy:\xset\to\dset$.

\begin{definition}[Maximal Robust \glsentrytext{sRAS} Set]
\label{def:sras_set}
The maximal robust \gls{sRAS} set is
\begin{equation}
\label{eq:max_sras_set}
\srasset
:=
\left\{
\state\in\xset
\middle|
\begin{array}{l}
\exists\cpolicy,\ \forall\dpolicy,\ \exists\tau\ge0\ \mathrm{s.t.}\\[-1mm]
\traj_{\state}^{\cpolicy,\dpolicy}(t)\notin\failureset,
\quad \forall t\ge0,\\[-1mm]
\traj_{\state}^{\cpolicy,\dpolicy}(t)\notin\targetset,
\quad \forall t<\tau,\\[-1mm]
\traj_{\state}^{\cpolicy,\dpolicy}(t)\in\targetset,
\quad \forall t\ge\tau
\end{array}
\right\}.
\end{equation}
Throughout, $\traj_{\state}^{\cpolicy,\dpolicy}:\mathbb{Z}_{\ge0}\to\xset$ denotes the state trajectory initialized at $\traj_{\state}^{\cpolicy,\dpolicy}(0)=\state$ and evolved according to $\cpolicy:\xset\to\cset$, $\dpolicy:\xset\to\dset$, and the dynamics~\eqref{eq:dynamics}.
\end{definition}

Conventional \gls{RAS} omits the condition $\traj_{\state}^{\cpolicy,\dpolicy}(t)\notin\targetset$, $\forall t<\tau$, in~\eqref{eq:max_sras_set}, allowing repeated target entry and exit before eventually remaining there permanently.
This is undesirable for robotic tasks that require irreversible task completion---in car racing, the ego vehicle should not fall behind again after an overtake---making \gls{sRAS} a more suitable specification.
Any \gls{sRAS}-feasible state $\state\in\xset$ is also \gls{RAS}-feasible.

\subsection{Robust Discrete-Time Control Barrier Function}
\label{subsec:rdcbf}

A robust \gls{DCBF} renders its $0$-superlevel set robustly controlled invariant by requiring a control input whose worst-case next-step barrier value is lower-bounded by a class-$\classK$ mapping of its current value.
We adopt the definition in~\cite{oh2026synthesis}.

\begin{definition}[Robust Discrete-Time \glsentrytext{CBF}]
\label{def:rdcbf}
A function $h:\xset\to\reals$ is a robust \gls{DCBF} for system~\eqref{eq:dynamics} if $\Omega_h:=\{\state\in\xset\mid h(\state)\ge0\}$ satisfies $\Omega_h\cap\failureset=\varnothing$, and there exists a class-$\classK$ function $\beta$, defined on a domain containing $h(\Omega_h)$, such that
\begin{equation}
\label{eq:rdcbf_condition}
\sup_{\ctrl\in\cset}
\inf_{\dstb\in\dset}
h\bigl(\dyn(\state,\ctrl,\dstb)\bigr)
\ge
\beta\bigl(h(\state)\bigr),
\quad
\forall\state\in\Omega_h.
\end{equation}
\end{definition}

$\Omega_h$ may in general be a strict subset of the maximal robust safe set from which failure can be avoided indefinitely against every admissible disturbance policy~\cite{oh2026synthesis}.

\section{Decomposing the \glsentrytext{sRAS} Game}
\label{sec:decomposition}
We introduce a two-stage reachability-based value synthesis pipeline for robust \gls{sRAS}.
Stage I computes the maximal robust terminal safe set, the largest robust controlled invariant subset of $\targetset\setminus\failureset$.
Stage II computes the maximal robust reach--avoid set from which this terminal safe set can be reached while avoiding $\failureset$ and interior target states outside the terminal safe set.
For each stage, we lift the corresponding value into the state--control--disturbance space through $Q$-functions~\cite{watkins1992q}, which enables \gls{CBF} constraint evaluation for black-box systems.
Finally, we prove almost-everywhere recovery of the maximal robust \gls{sRAS} set via our two-stage synthesis pipeline under an assumption.
Throughout, we assume that all displayed maxima and minima are attained.

We choose continuous \emph{safety and target margins} $\margin,\stagecostsafe:\xset\to\reals$ satisfying
\begin{equation}
\label{eq:task_margins}
\failureset
=
\{\state\in\xset\mid\margin(\state)<0\},
\quad
\targetset
=
\{\state\in\xset\mid\stagecostsafe(\state)\ge0\},
\end{equation}
with $\interior(\targetset)=\{\stagecostsafe>0\}$ and $\partial\targetset=\{\stagecostsafe=0\}$.

\subsection{Stage I: Stay Value}
\label{subsec:stay_value}
We define the \emph{stay-mode failure set} as
\begin{equation}
\label{eq:stay_failure_set}
\stayfailureset
:=
\failureset\cup\targetset^{\compl}.
\end{equation}

\begin{definition}[Maximal Robust Terminal Safe Set]
\label{def:terminal_set}
The maximal robust terminal safe set is
\begin{equation}
\label{eq:max_terminal_set_semantic}
\stayset
:=
\left\{
\state\in\xset
\middle|
\exists\cpolicy,\ \forall\dpolicy,\ \forall t\ge0,\ 
\traj_{\state}^{\cpolicy,\dpolicy}(t)\notin\stayfailureset
\right\}.
\end{equation}
\end{definition}

To transform this binary safety outcome into a ``game of degree'', we choose the \emph{stay-mode safety margin}
\begin{equation}
\label{eq:stay_failure_margin}
\staymargin(\state)
:=
\min\left\{\margin(\state),\stagecostsafe(\state)\right\},
\end{equation}
which satisfies $\stayfailureset=\{\staymargin<0\}$.

The \emph{stay value} $\stayval$ is then defined as the maximal $\staymargin$ that can be maintained against the worst-case admissible disturbance policy over the infinite horizon,
\begin{equation}
\label{eq:stay_value_trajectory}
\stayval(\state)
:=
\max_{\cpolicy}\min_{\dpolicy}\inf_{t\ge0}
\staymargin\!\left(\traj_{\state}^{\cpolicy,\dpolicy}(t)\right),
\end{equation}
and $\stayval$ satisfies the avoid-only Isaacs equation
\begin{equation}
\label{eq:stay_bellman}
\stayval(\state)
=
\min\left\{
\staymargin(\state),
\max_{\ctrl\in\cset}\min_{\dstb\in\dset}
\stayval\bigl(\dyn(\state,\ctrl,\dstb)\bigr)
\right\}.
\end{equation}
$\stayset$ in~\eqref{eq:max_terminal_set_semantic} is recovered as the $0$-superlevel set of $\stayval$:
\begin{equation}
\label{eq:stay_set_value}
\stayset
=
\left\{\state\in\xset\mid\stayval(\state)\ge0\right\}.
\end{equation}

To directly evaluate candidate control inputs under disturbance realizations, we define the \emph{lifted stay value}
\begin{equation}
\label{eq:stay_q}
\stayq(\state,\ctrl,\dstb)
:=
\min\left\{
\staymargin(\state),
\stayval\bigl(\dyn(\state,\ctrl,\dstb)\bigr)
\right\}.
\end{equation}
For fixed $\state$, the map $z\mapsto\min\{\staymargin(\state),z\}$ is monotone nondecreasing, so it preserves the attained inner minimum and outer maximum, yielding
\begin{equation}
\label{eq:stay_lift}
\stayval(\state)
=
\max_{\ctrl\in\cset}\min_{\dstb\in\dset}
\stayq(\state,\ctrl,\dstb).
\end{equation}
The corresponding stay fallback policy is
\begin{equation}
\label{eq:stay_fallback}
\stayfallback(\state)
\in
\argmax_{\ctrl\in\cset}
\min_{\dstb\in\dset}
\stayq(\state,\ctrl,\dstb).
\end{equation}

\subsection{Stage II: Reach--Avoid Value}
\label{subsec:reach_avoid_value}
We define the \emph{reach-mode target and failure sets} as
\begin{equation}
\label{eq:reach_sets}
\reachtargetset
:=
\stayset,
\qquad
\reachfailureset
:=
\failureset\cup\bigl(\interior(\targetset)\setminus\reachtargetset\bigr),
\end{equation}
so that the reach mode terminates only upon entry into $\stayset$.
$\reachfailureset$ does not include safe nonviable states on $\partial\targetset$, since $\failureset\cup\bigl(\targetset\setminus\reachtargetset\bigr)$ cannot in general be represented exactly as a strict sublevel set of a continuous safety margin.
We quantify the resulting discrepancy in \autoref{subsec:set_characterization}.

\begin{definition}[Maximal Robust Reach--Avoid Set]
\label{def:ra_set}
The maximal robust reach--avoid set is
\begin{equation}
\label{eq:max_ra_set_semantic}
\raset
:=
\left\{
\state\in\xset
\middle|
\begin{array}{l}
\exists\cpolicy,\ \forall\dpolicy,\ \exists\tau\ge0\ \mathrm{s.t.}\\[-1mm]
\traj_{\state}^{\cpolicy,\dpolicy}(\tau)\in\reachtargetset,\\[-1mm]
\traj_{\state}^{\cpolicy,\dpolicy}(t)\notin\reachfailureset,\,\,\,\forall 0\le t\le\tau
\end{array}
\right\}.
\end{equation}
\end{definition}

Since $\reachtargetset=\stayset\subseteq\targetset\setminus\failureset$ and $\stayset\cap\reachfailureset=\varnothing$, every $\state\in\stayset$ satisfies \autoref{def:ra_set} with $\tau=0$.
Therefore,
\begin{equation}
\label{eq:stay_ra_inclusion}
\stayset\subseteq\raset.
\end{equation}

We choose the \emph{reach-mode target margin} $\reachtargetmargin$ and \emph{reach-mode safety margin} $\reachmargin$ as
\begin{subequations}
\label{eq:reach_margins}
\begin{align}
\reachtargetmargin(\state)
&:=
\stayval(\state),
\label{eq:reach_target_margin}\\
\reachmargin(\state)
&:=
\min\left\{
\margin(\state),
\max\left\{-\stagecostsafe(\state),\reachtargetmargin(\state)\right\}
\right\}.
\label{eq:reach_failure_margin}
\end{align}
\end{subequations}
The target margin recovers the reach-mode target set,
\begin{equation}
\label{eq:reach_target_margin_set}
\reachtargetset
=
\left\{\state\in\xset\mid\reachtargetmargin(\state)\ge0\right\},
\end{equation}
while the safety margin recovers the reach-mode failure set,
\begin{equation}
\label{eq:reach_failure_margin_set}
\begin{aligned}
\{\reachmargin<0\}
&=
\{\margin<0\}\cup
\bigl(\{\stagecostsafe>0\}\cap\{\reachtargetmargin<0\}\bigr)\\
&=
\failureset\cup\bigl(\interior(\targetset)\setminus\reachtargetset\bigr)
=
\reachfailureset.
\end{aligned}
\end{equation}

The \emph{reach--avoid value} $\reachval$ is defined by jointly considering the $\reachtargetmargin$ at arrival and $\reachmargin$ maintained up to arrival under the worst-case admissible disturbance policy:
\begin{equation}
\label{eq:reach_value_trajectory}
\begin{aligned}
\reachval(\state)
:=
\max_{\cpolicy}\min_{\dpolicy}\max_{\tau\ge0}
\min&\Bigl\{\reachtargetmargin\!\left(\traj_{\state}^{\cpolicy,\dpolicy}(\tau)\right),\\
&\min_{0\le t\le\tau}
\reachmargin\!\left(\traj_{\state}^{\cpolicy,\dpolicy}(t)\right)
\Bigr\},
\end{aligned}
\end{equation}
and $\reachval$ satisfies the reach--avoid Isaacs equation
{\small
\begin{equation}
\label{eq:reach_bellman}
\reachval(\state)
=
\min\!\left\{
\reachmargin(\state),
\max\!\left\{
\reachtargetmargin(\state),
\max_{\ctrl\in\cset}\min_{\dstb\in\dset}
\reachval\bigl(\dyn(\state,\ctrl,\dstb)\bigr)
\right\}
\right\}.
\end{equation}
}
$\raset$ in~\eqref{eq:max_ra_set_semantic} is recovered as the $0$-superlevel set of $\reachval$:
\begin{equation}
\label{eq:ra_set_value}
\raset
=
\left\{\state\in\xset\mid\reachval(\state)\ge0\right\}.
\end{equation}

To directly evaluate candidate control inputs under disturbance realizations, we define the \emph{lifted reach--avoid value}
\begin{equation}
\label{eq:reach_q}
\reachq(\state,\ctrl,\dstb)
:=
\min\!\left\{
\reachmargin(\state),
\max\!\left\{
\reachtargetmargin(\state),
\reachval\bigl(\dyn(\state,\ctrl,\dstb)\bigr)
\right\}
\right\}.
\end{equation}
For fixed $\state$, the map $z\mapsto\min\{\reachmargin(\state),\max\{\reachtargetmargin(\state),z\}\}$ is monotone nondecreasing, and therefore
\begin{equation}
\label{eq:reach_lift}
\reachval(\state)
=
\max_{\ctrl\in\cset}\min_{\dstb\in\dset}
\reachq(\state,\ctrl,\dstb).
\end{equation}
The corresponding reach--avoid fallback policy is
\begin{equation}
\label{eq:reach_fallback}
\reachfallback(\state)
\in
\argmax_{\ctrl\in\cset}
\min_{\dstb\in\dset}
\reachq(\state,\ctrl,\dstb).
\end{equation}

\subsection{Maximal Robust \glsentrytext{sRAS} Set Characterization}
\label{subsec:set_characterization}
To characterize the difference between $\raset$ and $\srasset$ caused by the target-boundary discrepancy, we isolate the relevant portion of the $0$-level set of $\reachval$ as
\begin{equation}
\label{eq:zero_level_set}
\mathcal Z_{\mathrm R}
:=
\{\state\in\raset\setminus\stayset\mid\reachval(\state)=0\}.
\end{equation}

\begin{proposition}[\glsentrytext{sRAS} Set Characterization]
\label{prop:sras_set_characterization}
\begin{equation}
\label{eq:sequential_set_inclusion}
\raset\setminus\mathcal Z_{\mathrm R}
\subseteq
\srasset
\subseteq
\raset
.
\end{equation}
\end{proposition}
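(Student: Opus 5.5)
The plan is to prove the two inclusions in~\eqref{eq:sequential_set_inclusion} separately. The right inclusion $\srasset\subseteq\raset$ is a semantic argument that the post-entry state lies in $\stayset$. The left inclusion $\raset\setminus\mathcal Z_{\mathrm R}\subseteq\srasset$ uses the strict positivity of $\reachval$ to exclude the boundary discrepancy on $\partial\targetset$.

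\emph{Right inclusion.} Fix $\state\in\srasset$ with witnessing control policy $\cpolicy$. Take any $\dpolicy$, with entry time $\tau$ and $y:=\traj_{\state}^{\cpolicy,\dpolicy}(\tau)$.
\begin{enumerate}
\item For $t<\tau$ the trajectory is outside $\targetset$ and outside $\failureset$, so it is outside $\reachfailureset=\failureset\cup(\interior(\targetset)\setminus\stayset)$. Also $y\in\targetset\setminus\failureset$.
\item It therefore suffices to show $y\in\stayset$, using $\cpolicy$ itself as the stay witness. Suppose not. Then some $\dpolicy'$ drives $\traj_y^{\cpolicy,\dpolicy'}$ into $\stayfailureset$ at a first time $s$.
\item Splice a disturbance policy $\dpolicy''$: it equals $\dpolicy$ on the prefix states $\{\traj_{\state}^{\cpolicy,\dpolicy}(t)\}_{t<\tau}$, which all lie outside $\targetset$, and equals $\dpolicy'$ elsewhere.
\item Under $\dpolicy''$ the trajectory from $\state$ reproduces the prefix and reaches $y$, because only prefix states are queried. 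Its states from $y$ up to time $s$ lie in $\targetset$, so they are disjoint from the prefix states and $\dpolicy''=\dpolicy'$ there.
\item Hence the spliced trajectory first enters $\targetset$ exactly at $\tau$ and later enters $\stayfailureset$. This contradicts the \gls{sRAS} property of $\cpolicy$ against $\dpolicy''$.
\end{enumerate}
So $y\in\stayset$, and taking this $\tau$ in \autoref{def:ra_set} gives $\state\in\raset$.

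\emph{Left inclusion.} If $\state\in\stayset$, the stay fallback~\eqref{eq:stay_fallback} keeps the trajectory in $\targetset\setminus\failureset$ forever, so $\tau=0$ works. Otherwise $\state\in\raset\setminus\mathcal Z_{\mathrm R}$ with $\state\notin\stayset$ means $\reachval(\state)>0$. I would use the stationary switching policy
\[
\cpolicy(z)=\stayfallback(z)\ \text{on}\ \stayset,\qquad \cpolicy(z)=\reachfallback(z)\ \text{otherwise}.
\]
\begin{enumerate}
\item \emph{Positivity is invariant before arrival.} If $\reachval(z)>0$ and $\reachtargetmargin(z)=\stayval(z)<0$, then~\eqref{eq:reach_bellman} and~\eqref{eq:reach_lift} give $\min_{\dstb}\reachval(\dyn(z,\reachfallback(z),\dstb))\ge\reachval(z)>0$.
\item \emph{Consequence for each pre-arrival state.} Such a state satisfies $\reachmargin(z)\ge\reachval(z)>0$, so $\margin(z)>0$ and $\max\{-\stagecostsafe(z),\stayval(z)\}>0$.
\item Since $\stayval(z)<0$ there, we get $\stagecostsafe(z)<0$, i.e.\ $z\notin\targetset$. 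This is where strictness removes the problematic states of $\partial\targetset\setminus\stayset$, which lie outside $\reachfailureset$ but violate \gls{sRAS}.
\item Once the trajectory enters $\stayset$, invariance of $\stayset$ under $\stayfallback$ (from~\eqref{eq:stay_bellman}) keeps it in $\targetset\setminus\failureset$ forever.
\item Hence the first entry into $\targetset$ coincides with the first entry into $\stayset$, and all \gls{sRAS} conditions hold.
\end{enumerate}

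\emph{Main obstacle.} The hard part is guaranteeing \emph{finite-time} arrival at $\stayset$ from $\reachval(\state)>0$. The one-step argument above only gives persistence of positivity, not termination. I would first use the trajectory definition~\eqref{eq:reach_value_trajectory} with an attained optimal $\cpolicy$: for every $\dpolicy$ it yields some $\tau$ with $\reachtargetmargin(\traj(\tau))\ge\reachval(\state)>0$ and $\reachmargin>0$ on $[0,\tau]$. On the pre-arrival steps I would use this optimal policy in place of $\reachfallback$, and argue as above that those states are outside $\targetset$. The splicing construction from the right inclusion then lets me switch to $\stayfallback$ after arrival without breaking the universal quantifier over disturbance policies.
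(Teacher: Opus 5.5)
Your proposal is correct and, once you adopt the fix from your last paragraph (the trajectory-optimal reach policy from \eqref{eq:reach_value_trajectory} before first entry into $\stayset$, and $\stayfallback$ on $\stayset$), it is essentially the paper's proof; your right inclusion uses the same disturbance-splicing idea, phrased by contradiction and splicing on the prefix states rather than on $\targetset^{\compl}$ versus $\targetset$. You are right that the greedy $\reachfallback$ alone does not guarantee finite-time arrival, but no splicing is needed for the left inclusion: for any fixed $\dpolicy$ the switched policy reproduces the reach trajectory up to first entry into $\stayset$, and from then on $\stayfallback$ keeps the state in $\stayset$ against every disturbance.
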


\begin{proof}
We first show $\raset\setminus\mathcal Z_{\mathrm R}\subseteq\srasset$.
Fix $\state\in\raset\setminus\mathcal Z_{\mathrm R}$.
If $\state\in\stayset$, then \eqref{eq:stay_set_value}--\eqref{eq:stay_fallback} imply that $\stayfallback$ robustly renders $\stayset\subseteq\targetset\setminus\failureset$ invariant, so \autoref{def:sras_set} holds with $\tau=0$.

Otherwise if $\state\in\raset\setminus(\stayset\cup\mathcal Z_{\mathrm R})$, $\reachval(\state)>0$ by \eqref{eq:ra_set_value} and \eqref{eq:zero_level_set}, so \eqref{eq:reach_value_trajectory} gives a policy $\pi_{\mathrm R}^{u}$ such that, for every $\dpolicy$, the trajectory $\state_t:=\traj_{\state}^{\pi_{\mathrm R}^{u},\dpolicy}(t)$ admits $\bar\tau\ge0$ with $\reachtargetmargin(\state_{\bar\tau})>0$ and $\reachmargin(\state_t)>0,\ \forall t\le\bar\tau$.
Let $\tau\le\bar\tau$ be the first entry time into $\stayset$, which exists since $\state_{\bar\tau}\in\stayset$.
For all $t<\tau$, $\reachtargetmargin(\state_t)<0$, so \eqref{eq:reach_failure_margin} gives $\margin(\state_t)>0$ and $\stagecostsafe(\state_t)<0$, hence $\state_t\notin\failureset\cup\targetset$.
The policy applying $\pi_{\mathrm R}^{u}$ outside $\stayset$ and $\stayfallback$ inside $\stayset$ follows $\state_t$ through $\tau$ and remains in $\stayset$ thereafter, satisfying \autoref{def:sras_set} with $\tau$, thus proving $\state\in\srasset$.

To show $\srasset\subseteq\raset$, fix $\state\in\srasset$ and a policy $\cpolicy$ satisfying \autoref{def:sras_set}.
For any $\dpolicy$, let $\state_t:=\traj_{\state}^{\cpolicy,\dpolicy}(t)$ and let $\tau$ be its first entry time into $\targetset$.
To prove $\state_\tau\in\stayset$, take any $\tilde{\pi}^{d}$ and define $\hat{\pi}^{d}$ to equal $\dpolicy$ on $\targetset^{\compl}$ and $\tilde{\pi}^{d}$ on $\targetset$.
Since $\state_t\notin\targetset,\ \forall t<\tau$, $\hat{\state}_t:=\traj_{\state}^{\cpolicy,\hat{\pi}^{d}}(t)=\state_t,\ \forall t\le\tau$ and $\hat{\state}_t\in\targetset\setminus\failureset,\ \forall t\ge\tau$ by \autoref{def:sras_set}.
Now let $\tilde{\state}_{\tilde t}:=\traj_{\state_\tau}^{\cpolicy,\tilde{\pi}^{d}}(\tilde t)$.
Since $\hat{\state}_t\in\targetset,\ \forall t\ge\tau$ and $\hat{\pi}^{d}=\tilde{\pi}^{d}$ on $\targetset$, we have $\tilde{\state}_{\tilde t}=\hat{\state}_{\tau+\tilde t}\in\targetset\setminus\failureset,\ \forall\tilde t\ge0$.
Since $\tilde{\pi}^{d}$ was arbitrary, \autoref{def:terminal_set} gives $\state_\tau\in\stayset=\reachtargetset$.
For all $t<\tau$, $\state_t\notin\failureset\cup\targetset$ and $\interior(\targetset)\setminus\reachtargetset\subseteq\targetset$, so $\state_t\notin\reachfailureset$ by~\eqref{eq:reach_sets}.
At $t=\tau$, $\state_\tau\in\reachtargetset$ and $\state_\tau\notin\reachfailureset$ by \eqref{eq:reach_sets}.
Thus the same $\cpolicy$ and $\tau$ satisfy \autoref{def:ra_set}, proving $\state\in\raset$.
\end{proof}

In practice, $0$-level sets of reachability values often appear as boundary-like lower-dimensional sets, both in numerical solutions and learned approximations~\cite{fisac2019bridging,hsu2021safety,oh2026synthesis}.
This motivates our assumption that $\mathcal Z_{\mathrm R}$ has Lebesgue measure zero.

\begin{corollary}[Almost-Everywhere \glsentrytext{sRAS} Characterization]
\label{cor:almost_everywhere_sras_characterization}
If $\mathcal Z_{\mathrm R}$ has Lebesgue measure zero, then $\raset$ and $\srasset$ coincide almost everywhere.
\end{corollary}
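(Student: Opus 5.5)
The corollary follows directly from Proposition~\ref{prop:sras_set_characterization}. The plan is to bound the symmetric difference of $\raset$ and $\srasset$ by $\mathcal Z_{\mathrm R}$ and then use the measure-zero hypothesis. ``Coincide almost everywhere'' will be read as: the symmetric difference $\raset\,\triangle\,\srasset$ is contained in a Lebesgue-null set.

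\emph{Step 1: one direction of the difference is empty.} By the right inclusion in~\eqref{eq:sequential_set_inclusion}, $\srasset\subseteq\raset$, so $\srasset\setminus\raset=\varnothing$.

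\emph{Step 2: the other direction lies in $\mathcal Z_{\mathrm R}$.} By the left inclusion, $\raset\setminus\mathcal Z_{\mathrm R}\subseteq\srasset$. Taking complements inside $\raset$ gives $\raset\setminus\srasset\subseteq\raset\cap\mathcal Z_{\mathrm R}=\mathcal Z_{\mathrm R}$. The last equality holds because $\mathcal Z_{\mathrm R}\subseteq\raset$ by its definition~\eqref{eq:zero_level_set}. Combining with Step~1 yields $\raset\,\triangle\,\srasset\subseteq\mathcal Z_{\mathrm R}$.

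\emph{Step 3: conclude.} By hypothesis, $\mathcal Z_{\mathrm R}$ is Lebesgue-null. Lebesgue measure is complete, so every subset of a null set is measurable with measure zero. Hence $\raset\,\triangle\,\srasset$ is null, which is the almost-everywhere coincidence.

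The only point needing any care is Step~3. The sets $\raset$ and $\srasset$ need not be known to be Lebesgue measurable a priori, since they are defined semantically via policies. I would therefore phrase the conclusion as ``the symmetric difference is contained in a null set'' and invoke completeness of Lebesgue measure. This avoids having to establish measurability of either set separately.
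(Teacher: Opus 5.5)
Your proof is correct and matches the paper's argument: the paper gives no separate proof and treats the corollary as immediate from Proposition~\ref{prop:sras_set_characterization}, whose sandwich $\raset\setminus\mathcal Z_{\mathrm R}\subseteq\srasset\subseteq\raset$ places the symmetric difference inside the null set $\mathcal Z_{\mathrm R}$. Your appeal to completeness of Lebesgue measure, instead of assuming $\raset$ and $\srasset$ are measurable, is sound extra care that the paper leaves implicit.
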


\section{Robust Switched \glsentrytext{sRAS} \texorpdfstring{$Q$}{Q}-\glsentrytext{CBF}}
\label{sec:qcbf}
In this section, we show that $\reachval$ and $\stayval$ jointly form a valid robust \gls{DCBF} for the state-dependent switched system that represents the transition from the reach mode to the stay mode. 
The resulting \gls{sRAS} $Q$-\gls{CBF} safety filter, under every admissible disturbance realization, is recursively feasible and preserves membership in $\srasset$ for almost every initial state in $\srasset$ under a measure-zero condition. 
Moreover, if the filtered trajectory enters $\stayset$ in finite time, it remains there thereafter and satisfies the \gls{sRAS} specification in \autoref{def:sras_set}.

\subsection{State-Dependent Switched-System Formulation}
\label{subsec:switched_system}

We formalize the reach-to-stay transition using a state-dependent switched discrete-time control system~\cite{liberzon1999basic}.

\begin{definition}[State-Dependent Switched Discrete-Time Control System]
\label{def:switched_system}
A state-dependent switched discrete-time control system consists of a finite mode set $\modeset$, subsystem dynamics $\{\dyn_{\mode}\}_{\mode\in\modeset}$ with $\dyn_{\mode}:\xset\times\cset\times\dset\to\xset$, and a switching law $\sigma:\xset\to\modeset$.
The active mode at time $t$ is $\sigma_t:=\sigma(\state_t)$, and the state evolves as
\begin{equation}
\label{eq:switched_system}
\state_{t+1}=\dyn_{\sigma_t}(\state_t,\ctrl_t,\dstb_t).
\end{equation}
For each $\mode\in\modeset$, the operating region is $\xset_{\mode}:=\{\state\in\xset\mid\sigma(\state)=\mode\}$.
A switch occurs when $\sigma_{t+1}\neq\sigma_t$.
\end{definition}

\begin{definition}[Robust Switched \glsentrytext{DCBF}]
\label{def:switched_dcbf}
For the switched system in \autoref{def:switched_system}, let $\mathcal F_{\mode}\subseteq\xset$ and $h_{\mode}:\xset\to\reals$ denote the failure set and barrier function of each $\mode\in\modeset$, with $\Omega_{h_{\mode}}:=\{\state\in\xset_{\mode}\mid h_{\mode}(\state)\ge0\}$.
The family $\{h_{\mode}\}_{\mode\in\modeset}$ is a valid robust switched \gls{DCBF} if, for every $\mode\in\modeset$, there exists a class-$\classK$ function $\beta_{\mode}$, defined on a domain containing $h_{\mode}(\Omega_{h_{\mode}})$, such that the following conditions hold.

\noindent\textbf{(i) Mode-wise safety:}
$\Omega_{h_{\mode}}\cap\mathcal F_{\mode}=\varnothing$.

\noindent\textbf{(ii) Switched barrier condition:}
For every $\state\in\Omega_{h_{\mode}}$, there exists $\ctrl\in\cset$ such that, for every $\dstb\in\dset$, letting $\state':=\dyn_{\mode}(\state,\ctrl,\dstb)$ and $\mode':=\sigma(\state')$,
\begin{equation}
\label{eq:switched_dcbf_condition}
h_{\mode'}(\state')
\ge
\begin{cases}
\beta_{\mode}\bigl(h_{\mode}(\state)\bigr), & \mode'=\mode,\\
0, & \mode'\neq\mode.
\end{cases}
\end{equation}
\end{definition}

Condition~(ii) requires one control to satisfy the conventional robust \gls{DCBF} condition in \autoref{def:rdcbf} for same-mode successors and place switched successors in the destination certificate's $0$-superlevel set, for every disturbance.
The latter is analogous to the jump condition of hybrid \glspl{CBF}~\cite{lindemann2021learning}.

\subsection{Robust Switched \glsentrytext{sRAS} \texorpdfstring{$Q$}{Q}-\glsentrytext{CBF}}
\label{subsec:switched_sras_qcbf}

We represent the \gls{sRAS} problem with a transition from reach to stay mode as a state-dependent switched system.

\begin{definition}[\glsentrytext{sRAS} Switched System]
\label{def:sras_switched_system}
Let $\modeset:=\{\reachmode,\staymode\}$ and $\dyn_{\reachmode}=\dyn_{\staymode}:=\dyn$, and associate the reach and stay modes with failure sets $\reachfailureset$ and $\stayfailureset$, respectively.
The state-dependent switching law is
\begin{equation}
\label{eq:sras_switching_law}
\sigma(\state)
:=
\begin{cases}
\reachmode\ \text{(reach mode)}, & \state\notin\stayset,\\
\staymode\ \text{(stay mode)}, & \state\in\stayset.
\end{cases}
\end{equation}
\end{definition}

We establish feasibility of the mode-dependent $Q$-\gls{CBF} constraints and their equivalence to robust \gls{DCBF} constraints.

\begin{lemma}[Feasibility of \texorpdfstring{$Q$}{Q}-\glsentrytext{CBF} Constraints]
\label{lem:switched_q_feasibility}
Let $\beta_{\mathrm R}$ and $\beta_{\mathrm S}$ be class-$\classK$ functions satisfying $\beta_{\mathrm R}(\rho)\le\rho$ and $\beta_{\mathrm S}(\rho)\le\rho$ for all $\rho\ge0$.
For every $\state\in\raset\setminus\stayset$, there exists $\ctrl\in\cset$ such that
\begin{equation}
\label{eq:reach_q_constraint}
\min_{\dstb\in\dset}
\reachq(\state,\ctrl,\dstb)
\ge
\beta_{\mathrm R}\bigl(\reachval(\state)\bigr).
\end{equation}
For every $\state\in\stayset$, there exists $\ctrl\in\cset$ such that
\begin{equation}
\label{eq:stay_q_constraint}
\min_{\dstb\in\dset}
\stayq(\state,\ctrl,\dstb)
\ge
\beta_{\mathrm S}\bigl(\stayval(\state)\bigr).
\end{equation}
\end{lemma}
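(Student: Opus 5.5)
The proof rests directly on the lifted identities \eqref{eq:stay_lift} and \eqref{eq:reach_lift}, together with the standing assumption that all displayed maxima and minima are attained. The strategy is to select the fallback controls $\reachfallback(\state)$ and $\stayfallback(\state)$ from \eqref{eq:reach_fallback} and \eqref{eq:stay_fallback}. With these choices, the worst-case lifted value equals the current value exactly. The class-$\classK$ bound $\beta(\rho)\le\rho$ for $\rho\ge0$ then yields the required inequalities.

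\emph{Reach case.} Fix $\state\in\raset\setminus\stayset$. By \eqref{eq:ra_set_value}, $\reachval(\state)\ge0$, so $\beta_{\mathrm R}\bigl(\reachval(\state)\bigr)$ is well defined (the domain of $\beta_{\mathrm R}$ contains $[0,\infty)$). It also satisfies $\beta_{\mathrm R}\bigl(\reachval(\state)\bigr)\le\reachval(\state)$. Take $\ctrl=\reachfallback(\state)$, which exists because the outer maximum in \eqref{eq:reach_lift} is attained. Then \eqref{eq:reach_lift} gives
\begin{equation*}
\min_{\dstb\in\dset}\reachq(\state,\ctrl,\dstb)=\reachval(\state)\ge\beta_{\mathrm R}\bigl(\reachval(\state)\bigr),
\end{equation*}
which is \eqref{eq:reach_q_constraint}.

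\emph{Stay case.} Fix $\state\in\stayset$. By \eqref{eq:stay_set_value}, $\stayval(\state)\ge0$. Choose $\ctrl=\stayfallback(\state)$. Then \eqref{eq:stay_lift} gives $\min_{\dstb}\stayq(\state,\ctrl,\dstb)=\stayval(\state)\ge\beta_{\mathrm S}\bigl(\stayval(\state)\bigr)$, which is \eqref{eq:stay_q_constraint}.

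The only point needing care is the validity of the lifts \eqref{eq:stay_lift} and \eqref{eq:reach_lift} themselves, and the paper has already established these. The argument is that, for fixed $\state$, the outer map $z\mapsto\min\{\reachmargin(\state),\max\{\reachtargetmargin(\state),z\}\}$ is monotone nondecreasing. It therefore commutes with the attained $\min_{\dstb}$ and $\max_{\ctrl}$, so the lift reproduces the Isaacs equation \eqref{eq:reach_bellman}; the same reasoning applies to \eqref{eq:stay_bellman}. Given these identities and the nonnegativity of the values on the respective sets, no real obstacle remains. Note that the reach case does not need the exclusion $\state\notin\stayset$; that restriction only matters later, when the switching law \eqref{eq:sras_switching_law} is used.
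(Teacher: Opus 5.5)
Your proposal is correct and matches the paper's proof: you pick the fallback controls $\reachfallback(\state)$ and $\stayfallback(\state)$, use the lifted identities \eqref{eq:reach_lift} and \eqref{eq:stay_lift} to get equality with the current value, and conclude from nonnegativity of the value on the relevant set together with $\beta(\rho)\le\rho$. Your side remark is also right: the reach case does not need $\state\notin\stayset$, since $\stayset\subseteq\raset$ and the same argument applies there.
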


\begin{proof}
For $\state\in\raset\setminus\stayset$, choose $\ctrl=\reachfallback(\state)$.
By \eqref{eq:reach_lift} and \eqref{eq:reach_fallback}, $\min_{\dstb}\reachq(\state,\reachfallback(\state),\dstb)=\reachval(\state)\ge\beta_{\mathrm R}(\reachval(\state))$, where the inequality follows from $\reachval(\state)\ge0$ and $\beta_{\mathrm R}(\rho)\le\rho$.
The stay constraint follows identically from \eqref{eq:stay_lift}--\eqref{eq:stay_fallback}.
\end{proof}

\begin{lemma}[Equivalence of \texorpdfstring{$Q$}{Q}-\glsentrytext{CBF} Constraints]
\label{lem:switched_q_equivalence}
Let $\beta_{\mathrm R}$ and $\beta_{\mathrm S}$ be class-$\classK$ functions satisfying $\beta_{\mathrm R}(\rho)\le\rho$ and $\beta_{\mathrm S}(\rho)\le\rho$ for all $\rho\ge0$.
For every $\state\in\raset\setminus\stayset$ and $\ctrl\in\cset$, \eqref{eq:reach_q_constraint} is equivalent to
\begin{equation}
\label{eq:reach_dcbf_constraint}
\min_{\dstb\in\dset}
\reachval\bigl(\dyn(\state,\ctrl,\dstb)\bigr)
\ge
\beta_{\mathrm R}\bigl(\reachval(\state)\bigr).
\end{equation}
For every $\state\in\stayset$ and $\ctrl\in\cset$, \eqref{eq:stay_q_constraint} is equivalent to
\begin{equation}
\label{eq:stay_dcbf_constraint}
\min_{\dstb\in\dset}
\stayval\bigl(\dyn(\state,\ctrl,\dstb)\bigr)
\ge
\beta_{\mathrm S}\bigl(\stayval(\state)\bigr).
\end{equation}
\end{lemma}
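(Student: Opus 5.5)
The approach is to unpack the definitions \eqref{eq:stay_q} and \eqref{eq:reach_q} directly. In each mode, I would show that the outer margin terms in the lifted value cannot be the binding side of the constraint once $\state$ lies in the relevant set. After that, each $Q$-constraint reduces to the corresponding constraint on the successor value.

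Two facts are used throughout. First, because the maps $z\mapsto\min\{a,z\}$ and $z\mapsto\min\{a,\max\{b,z\}\}$ are monotone nondecreasing in $z$ (the observation already used to derive \eqref{eq:stay_lift} and \eqref{eq:reach_lift}), the minimum over $\dstb$ can be pulled inside. For example,
\[
\min_{\dstb}\stayq(\state,\ctrl,\dstb)=\min\Bigl\{\staymargin(\state),\min_{\dstb}\stayval\bigl(\dyn(\state,\ctrl,\dstb)\bigr)\Bigr\},
\]
and analogously for $\reachq$. Second, since a class-$\classK$ function $\beta$ satisfies $\beta(0)=0$ and is increasing, we have $0\le\beta(\rho)\le\rho$ for every $\rho\ge0$.

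\textbf{Stay case.} Let $\state\in\stayset$, so $\stayval(\state)\ge0$ by \eqref{eq:stay_set_value}. The Isaacs equation \eqref{eq:stay_bellman} gives $\staymargin(\state)\ge\stayval(\state)\ge\beta_{\mathrm S}(\stayval(\state))$. For any real $z$, the inequality $\min\{\staymargin(\state),z\}\ge\beta_{\mathrm S}(\stayval(\state))$ therefore holds if and only if $z\ge\beta_{\mathrm S}(\stayval(\state))$. Taking $z=\min_{\dstb}\stayval(\dyn(\state,\ctrl,\dstb))$ shows that \eqref{eq:stay_q_constraint} is equivalent to \eqref{eq:stay_dcbf_constraint}.

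\textbf{Reach case.} Let $\state\in\raset\setminus\stayset$, and write $\rho:=\reachval(\state)$, so $\rho\ge0$ by \eqref{eq:ra_set_value}. Two bounds hold:
\begin{itemize}
\item $\reachmargin(\state)\ge\rho\ge\beta_{\mathrm R}(\rho)$, by \eqref{eq:reach_bellman};
\item $\reachtargetmargin(\state)=\stayval(\state)<0\le\beta_{\mathrm R}(\rho)$, by \eqref{eq:reach_target_margin} and $\state\notin\stayset$.
\end{itemize}
Now fix any real $z$. The first bound means the outer minimum with $\reachmargin(\state)$ never violates the threshold $\beta_{\mathrm R}(\rho)$. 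The second bound means $\max\{\reachtargetmargin(\state),z\}\ge\beta_{\mathrm R}(\rho)$ holds exactly when $z\ge\beta_{\mathrm R}(\rho)$, because the first argument of the max lies strictly below the threshold. Hence
\[
\min\bigl\{\reachmargin(\state),\max\{\reachtargetmargin(\state),z\}\bigr\}\ge\beta_{\mathrm R}(\rho)
\iff z\ge\beta_{\mathrm R}(\rho).
\]
Taking $z=\min_{\dstb}\reachval(\dyn(\state,\ctrl,\dstb))$ shows that \eqref{eq:reach_q_constraint} is equivalent to \eqref{eq:reach_dcbf_constraint}.

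The main delicate point is the reach case. There the target-margin term inside the max could, a priori, satisfy the constraint on its own without any condition on the successor value. This is excluded only because $\state\notin\stayset$ forces $\reachtargetmargin(\state)<0\le\beta_{\mathrm R}(\rho)$, which is why the lemma is restricted to $\raset\setminus\stayset$.
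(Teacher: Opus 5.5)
Your proof is correct and follows essentially the same route as the paper: you establish $\reachmargin(\state)\ge\reachval(\state)\ge\beta_{\mathrm R}(\reachval(\state))\ge0>\reachtargetmargin(\state)$ (and the analogous stay bound), so the margin terms in the lifted values never bind. The only cosmetic difference is that you first pull $\min_{\dstb}$ inside by monotonicity and then argue a scalar equivalence, whereas the paper states the equivalence pointwise for every $\dstb$.
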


\begin{proof}
Fix $\state\in\raset\setminus\stayset$, $\ctrl\in\cset$, and let $c_{\mathrm R}:=\beta_{\mathrm R}(\reachval(\state))$.
Equations \eqref{eq:ra_set_value}, \eqref{eq:reach_target_margin_set}, \eqref{eq:reach_bellman}, and $\beta_{\mathrm R}(\rho)\le\rho$ give $\reachmargin(\state)\ge\reachval(\state)\ge c_{\mathrm R}\ge0>\reachtargetmargin(\state)$.
Thus, for every $\dstb\in\dset$, \eqref{eq:reach_q} gives $\reachq(\state,\ctrl,\dstb)\ge c_{\mathrm R}$ if and only if $\reachval(\dyn(\state,\ctrl,\dstb))\ge c_{\mathrm R}$.

For $\state\in\stayset$, the stay-mode equivalence follows analogously from \eqref{eq:stay_set_value}, \eqref{eq:stay_bellman}, $\beta_{\mathrm S}(\rho)\le\rho$, and \eqref{eq:stay_q}.
\end{proof}

These results allow us to show that $\reachval$ and $\stayval$ jointly form a robust switched \gls{DCBF} for the \gls{sRAS} switched system.

\begin{theorem}[\glsentrytext{sRAS} \texorpdfstring{$Q$}{Q}-\glsentrytext{CBF}]
\label{thm:switched_sras_qcbf}
Let $\beta_{\mathrm R}$ and $\beta_{\mathrm S}$ be class-$\classK$ functions satisfying $\beta_{\mathrm R}(\rho)\le\rho$ and $\beta_{\mathrm S}(\rho)\le\rho$ for all $\rho\ge0$.
For the \gls{sRAS} switched system in \autoref{def:sras_switched_system}, let $h_{\reachmode}:=\reachval$ and $h_{\staymode}:=\stayval$.
Then the family $\{h_{\reachmode}, h_{\staymode}\}$ is a valid robust switched \gls{DCBF} in the sense of \autoref{def:switched_dcbf}.
\end{theorem}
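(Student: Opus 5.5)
We verify conditions (i) and (ii) of \autoref{def:switched_dcbf} with the failure sets $\mathcal F_{\reachmode}=\reachfailureset$ and $\mathcal F_{\staymode}=\stayfailureset$ from \autoref{def:sras_switched_system}. The operating regions are $\xset_{\reachmode}=\xset\setminus\stayset$ and $\xset_{\staymode}=\stayset$. Hence $\Omega_{h_{\reachmode}}=\{\state\notin\stayset\mid\reachval(\state)\ge0\}=\raset\setminus\stayset$ by \eqref{eq:ra_set_value}, and $\Omega_{h_{\staymode}}=\stayset$ by \eqref{eq:stay_set_value}. We take $\beta_{\mathrm R},\beta_{\mathrm S}$ as given; they are defined on $[0,\infty)$, which contains both barrier images.

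\textbf{Condition (i).} For the stay mode, $\stayset\cap\stayfailureset=\varnothing$ follows directly from \autoref{def:terminal_set} at $t=0$, or equivalently from $\stayval\le\staymargin$ in \eqref{eq:stay_bellman}. For the reach mode, take $\state\in\raset\setminus\stayset$. By \eqref{eq:reach_bellman}, $\reachmargin(\state)\ge\reachval(\state)\ge0$, and \eqref{eq:reach_failure_margin_set} then gives $\state\notin\reachfailureset$.

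\textbf{Condition (ii), stay mode.} For $\state\in\stayset$, choose $\ctrl$ from \autoref{lem:switched_q_feasibility}. By \autoref{lem:switched_q_equivalence}, $\stayval(\state')\ge\beta_{\mathrm S}(\stayval(\state))\ge0$ for every $\dstb$, where $\state'=\dyn(\state,\ctrl,\dstb)$. Therefore every successor lies in $\stayset$, so $\mode'=\staymode$ and the same-mode branch of \eqref{eq:switched_dcbf_condition} holds.

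\textbf{Condition (ii), reach mode.} For $\state\in\raset\setminus\stayset$, choose $\ctrl$ from \autoref{lem:switched_q_feasibility}. By \autoref{lem:switched_q_equivalence}, $\reachval(\state')\ge\beta_{\mathrm R}(\reachval(\state))\ge0$ for all $\dstb$. We split on the successor's mode.
\begin{itemize}
\item If $\state'\notin\stayset$, then $\mode'=\reachmode$ and the required inequality is exactly the one just obtained.
\item If $\state'\in\stayset$, then $\mode'=\staymode$ and we need $h_{\staymode}(\state')=\stayval(\state')\ge0$. This holds by \eqref{eq:stay_set_value}, since $\state'\in\stayset$.
\end{itemize}
The switched branch therefore needs only the definition of the switching law \eqref{eq:sras_switching_law}, which declares a switch exactly when the successor enters $\stayset=\{\stayval\ge0\}$. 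No monotonicity between $\reachval$ and $\stayval$ is needed.

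\textbf{Main obstacle.} The step needing the most care is the reach-mode same-mode case. There, \autoref{lem:switched_q_equivalence} relies on $\reachtargetmargin(\state)<0$ together with $\reachmargin(\state)\ge\beta_{\mathrm R}(\reachval(\state))$. These facts remove the $\max$ with $\reachtargetmargin(\state)$ and the $\min$ with $\reachmargin(\state)$ in \eqref{eq:reach_q}, so the $Q$-constraint turns into the genuine next-state value constraint. Both facts hold because $\state\notin\stayset$ and because of the Isaacs equation; this is already packaged in the lemma. The remaining task is to check that one control, chosen independently of $\dstb$, works for every disturbance across both successor cases. This holds because the case split is made per disturbance after the control has been fixed.
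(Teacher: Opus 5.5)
Your proof is correct and follows the paper's argument essentially step for step. You make the same identifications $\Omega_{h_{\reachmode}}=\raset\setminus\stayset$ and $\Omega_{h_{\staymode}}=\stayset$, use \autoref{lem:switched_q_feasibility} and \autoref{lem:switched_q_equivalence} in the same way, and split per disturbance on whether $\state'\in\stayset$; the only cosmetic difference is that you obtain reach-mode safety in (i) via $\reachmargin\ge\reachval$ and \eqref{eq:reach_failure_margin_set}, whereas the paper reads $\raset\cap\reachfailureset=\varnothing$ directly off \autoref{def:ra_set} at $t=0$.
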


\begin{proof}
We verify the two conditions in \autoref{def:switched_dcbf}.

\noindent\textbf{(i) Mode-wise safety.}
By \eqref{eq:sras_switching_law}, \eqref{eq:ra_set_value}, and \eqref{eq:stay_set_value}, $\Omega_{h_{\reachmode}}=\raset\setminus\stayset$ and $\Omega_{h_{\staymode}}=\stayset$.
By \autoref{def:ra_set} and \autoref{def:terminal_set}, $\raset\cap\reachfailureset=\varnothing$ and $\stayset\cap\stayfailureset=\varnothing$, establishing mode-wise safety.

\noindent\textbf{(ii) Switched barrier condition.}
For $\state\in\raset\setminus\stayset$, \autoref{lem:switched_q_feasibility} provides a control satisfying \eqref{eq:reach_q_constraint}, and \autoref{lem:switched_q_equivalence} ensures, for every $\dstb\in\dset$, $\reachval(\state')\ge\beta_{\mathrm R}(\reachval(\state))\ge0$, where $\state':=\dyn(\state,\ctrl,\dstb)$.
If $\state'\notin\stayset$, this is the same-mode condition of \eqref{eq:switched_dcbf_condition}; if $\state'\in\stayset$, \eqref{eq:stay_set_value} gives $\stayval(\state')\ge0$, satisfying the switched condition of \eqref{eq:switched_dcbf_condition}.
For $\state\in\stayset$, the two lemmas provide a control satisfying \eqref{eq:stay_q_constraint} and ensure $\stayval(\state')\ge\beta_{\mathrm S}(\stayval(\state))\ge0$ for every $\dstb\in\dset$.
Thus $\state'\in\stayset$ by \eqref{eq:stay_set_value} and the same-mode condition of \eqref{eq:switched_dcbf_condition} is satisfied.
\end{proof}

\subsection{Robust Switched \glsentrytext{sRAS} \texorpdfstring{$Q$}{Q}-\glsentrytext{CBF} Safety Filter}
\label{subsec:switched_filter}

Building on \autoref{thm:switched_sras_qcbf}, we now propose the \gls{sRAS} $Q$-\gls{CBF} safety filter for black-box systems under uncertainty.

\begin{definition}[\glsentrytext{sRAS} \texorpdfstring{$Q$}{Q}-\glsentrytext{CBF} Safety Filter]
\label{def:switched_filter}
Let $\beta_{\mathrm R}$ and $\beta_{\mathrm S}$ be class-$\classK$ functions satisfying $\beta_{\mathrm R}(\rho)\le\rho$ and $\beta_{\mathrm S}(\rho)\le\rho$ for all $\rho\ge0$, and let $\policyTask:\xset\to\cset$ be any task policy.
For each $\state\in\raset$, the \gls{sRAS} $Q$-\gls{CBF} safety filter solves
\begin{subequations}
\label{eq:switched_filter}
\begin{align}
&\safetyFilter(\state,\policyTask(\state))
\in \argmin_{\ctrl\in\cset}\left\|\ctrl-\policyTask(\state)\right\|_2^2
\label{eq:switched_filter_objective}\\
\mathrm{s.t.}\,\,
&\min_{\dstb\in\dset}\reachq(\state,\ctrl,\dstb)
\ge\beta_{\mathrm R}(\reachval(\state)),\,\,\state\in\raset\setminus\stayset
\label{eq:switched_filter_reach}\\
&\min_{\dstb\in\dset}\stayq(\state,\ctrl,\dstb)
\ge\beta_{\mathrm S}(\stayval(\state)),\,\,\state\in\stayset.
\label{eq:switched_filter_stay}
\end{align}
\end{subequations}
\end{definition}

Crucially, given $\reachval$, $\stayval$ and their lifted forms $\reachq$, $\stayq$, \eqref{eq:switched_filter_reach}--\eqref{eq:switched_filter_stay} can be evaluated for black-box systems without known dynamics, control-affinity, or value gradients.

Finally, we show that the filter is recursively feasible and preserves the ability to satisfy the \gls{sRAS} specification.

\begin{theorem}[\glsentrytext{sRAS} Feasibility Preservation]
\label{thm:recursive_feasibility}
For any task policy $\policyTask:\xset\to\cset$, let $\pi_{\mathrm F}^{u}(\state):=\safetyFilter(\state,\policyTask(\state))$ denote the filtered policy from \autoref{def:switched_filter}.
For every initial state $\state_0\in\raset\setminus\mathcal Z_{\mathrm R}\subseteq\srasset$ and every disturbance policy $\dpolicy:\xset\to\dset$, the \gls{sRAS} $Q$-\gls{CBF} safety filter is recursively feasible and satisfies $\traj_{\state_0}^{\pi_{\mathrm F}^{u},\dpolicy}(t)\in\raset\setminus\mathcal Z_{\mathrm R}\subseteq\srasset,\ \forall t\ge0$.
Moreover, if $\traj_{\state_0}^{\pi_{\mathrm F}^{u},\dpolicy}$ enters $\stayset$ at $\tau\ge0$, then $\traj_{\state_0}^{\pi_{\mathrm F}^{u},\dpolicy}(t)\in\stayset,\ \forall t\ge\tau$, and $\traj_{\state_0}^{\pi_{\mathrm F}^{u},\dpolicy}$ satisfies the \gls{sRAS} conditions in \autoref{def:sras_set}.
\end{theorem}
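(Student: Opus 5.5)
The plan is an induction on $t$ with the invariant $\state_t:=\traj_{\state_0}^{\pi_{\mathrm F}^{u},\dpolicy}(t)\in\raset\setminus\mathcal Z_{\mathrm R}$, where recursive feasibility is checked at each step. The inclusion $\raset\setminus\mathcal Z_{\mathrm R}\subseteq\srasset$ is already given by \autoref{prop:sras_set_characterization}, so only invariance needs to be shown.

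\emph{Feasibility.} First I partition $\raset\setminus\mathcal Z_{\mathrm R}$ into two pieces. The first is $\stayset$. The second is $\raset\setminus(\stayset\cup\mathcal Z_{\mathrm R})$, on which $\reachval>0$ by \eqref{eq:ra_set_value} and \eqref{eq:zero_level_set}. Whenever $\state_t$ lies in $\raset$, exactly one of \eqref{eq:switched_filter_reach}--\eqref{eq:switched_filter_stay} is active. \autoref{lem:switched_q_feasibility} supplies a control satisfying it, so the program \eqref{eq:switched_filter} is feasible and $\pi_{\mathrm F}^{u}(\state_t)$ is well defined. Here I use the standing assumption that minima are attained.

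\emph{Invariance.} Fix the applied control and the disturbance $\dstb_t=\dpolicy(\state_t)$, and write $\state_{t+1}=\dyn(\state_t,\ctrl_t,\dstb_t)$.

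If $\state_t\in\stayset$, then \autoref{lem:switched_q_equivalence} turns \eqref{eq:stay_q_constraint} into $\stayval(\state_{t+1})\ge\beta_{\mathrm S}(\stayval(\state_t))\ge0$. Hence $\state_{t+1}\in\stayset$, and $\stayset\subseteq\raset\setminus\mathcal Z_{\mathrm R}$ holds by \eqref{eq:stay_ra_inclusion} and the definition of $\mathcal Z_{\mathrm R}$.

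If $\state_t\in\raset\setminus(\stayset\cup\mathcal Z_{\mathrm R})$, then \autoref{lem:switched_q_equivalence} gives $\reachval(\state_{t+1})\ge\beta_{\mathrm R}(\reachval(\state_t))$. Since $\reachval(\state_t)>0$ and $\beta_{\mathrm R}$ is class-$\classK$, the right-hand side is strictly positive. This strictness is the key point: it excludes $\reachval(\state_{t+1})=0$. So either $\state_{t+1}\in\stayset$, or $\state_{t+1}\in\raset\setminus\stayset$ with $\reachval(\state_{t+1})>0$, i.e.\ $\state_{t+1}\notin\mathcal Z_{\mathrm R}$. I expect this to be the only delicate step. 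It is exactly why the initial states in $\mathcal Z_{\mathrm R}$ must be removed: from $\reachval=0$ the constraint only guarantees $\reachval\ge0$, which does not rule out stalling on the zero level set.

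\emph{Stay and sRAS.} The stay case above already shows that once $\state_\tau\in\stayset$, we have $\state_t\in\stayset$ for all $t\ge\tau$. Then $\state_t\in\targetset\setminus\failureset$ for all $t\ge\tau$, because $\stayset\cap\stayfailureset=\varnothing$.

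For the sRAS conditions, take $\tau$ to be the first entry time into $\stayset$ and consider $t<\tau$. Then $\state_t\in\raset\setminus\stayset$ and $\reachval(\state_t)>0$, by the invariance of the strict-positivity region shown above. By \eqref{eq:reach_bellman}, $\reachmargin(\state_t)\ge\reachval(\state_t)>0$, so \eqref{eq:reach_failure_margin} gives $\margin(\state_t)>0$ and $\max\{-\stagecostsafe(\state_t),\stayval(\state_t)\}>0$. Since $\state_t\notin\stayset$ forces $\stayval(\state_t)<0$, we must have $-\stagecostsafe(\state_t)>0$, i.e.\ $\stagecostsafe(\state_t)<0$. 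Hence $\state_t\notin\failureset\cup\targetset$. This reproduces the argument in \autoref{prop:sras_set_characterization}.

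If $\state_0\in\stayset$ the pre-entry part is vacuous. In either case, \autoref{def:sras_set} is satisfied along the realized trajectory with this $\tau$.
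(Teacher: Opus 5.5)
Your proposal is correct and follows the paper's proof essentially step for step. Both use the same two-case induction over $\stayset$ and $\raset\setminus(\stayset\cup\mathcal Z_{\mathrm R})$, via \autoref{lem:switched_q_feasibility}, \autoref{lem:switched_q_equivalence}, and strict positivity of $\beta_{\mathrm R}$ at positive arguments; both then apply \eqref{eq:reach_bellman} to get $\reachmargin>0$, which together with $\stayval<0$ gives $\margin>0$ and $\stagecostsafe<0$ before first entry into $\stayset$.
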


\begin{proof}
By \autoref{prop:sras_set_characterization}, $\raset\setminus\mathcal Z_{\mathrm R}\subseteq\srasset$.
Fix arbitrary $\state_0\in\raset\setminus\mathcal Z_{\mathrm R}$, $\policyTask$, and $\dpolicy$, and let $\state_t:=\traj_{\state_0}^{\pi_{\mathrm F}^{u},\dpolicy}(t)$.

If $\state_t\in\stayset\subseteq\raset\setminus\mathcal Z_{\mathrm R}$, \autoref{lem:switched_q_feasibility} gives feasibility of \eqref{eq:switched_filter_stay}, and \autoref{lem:switched_q_equivalence} gives $\stayval(\state_{t+1})\ge\beta_{\mathrm S}(\stayval(\state_t))\ge0$.
Thus $\state_{t+1}\in\stayset$ by \eqref{eq:stay_set_value}, and by induction $\state_{\bar t}\in\stayset,\ \forall \bar t\ge t$.

If $\state_t\in\raset\setminus(\stayset\cup\mathcal Z_{\mathrm R})$, \autoref{lem:switched_q_feasibility} gives feasibility of \eqref{eq:switched_filter_reach}, \eqref{eq:ra_set_value} and \eqref{eq:zero_level_set} give $\reachval(\state_t)>0$, and \autoref{lem:switched_q_equivalence} gives $\reachval(\state_{t+1})\ge\beta_{\mathrm R}(\reachval(\state_t))>0$. 
Hence $\state_{t+1}\in\raset\setminus\mathcal Z_{\mathrm R}$ by \eqref{eq:ra_set_value} and \eqref{eq:zero_level_set}.
Together with $\state_t\in\stayset$ case, induction yields recursive feasibility and $\state_t\in\raset\setminus\mathcal Z_{\mathrm R}\subseteq\srasset,\ \forall t\ge0$.

Finally, whenever $\state_t\in\raset\setminus(\stayset\cup\mathcal Z_{\mathrm R})$, \eqref{eq:ra_set_value} and \eqref{eq:zero_level_set} give $\reachval(\state_t)>0$, so \eqref{eq:reach_bellman} gives $\reachmargin(\state_t)>0$.
Since \eqref{eq:reach_target_margin_set} gives $\reachtargetmargin(\state_t)<0$, \eqref{eq:reach_failure_margin} implies $\margin(\state_t)>0$ and $\stagecostsafe(\state_t)<0$.
Therefore, if $\state_t$ reaches $\stayset$, it satisfies \autoref{def:sras_set}.
\end{proof}

\begin{corollary}[Almost-Everywhere Maximality]
\label{cor:almost_everywhere_maximality}
If $\mathcal Z_{\mathrm R}$ has Lebesgue measure zero, then, by \autoref{prop:sras_set_characterization} and \autoref{cor:almost_everywhere_sras_characterization}, the guarantees of \autoref{thm:recursive_feasibility} apply to almost every initial state in $\srasset$ and any task policy against every admissible disturbance policy.
\end{corollary}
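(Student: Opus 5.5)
The corollary is essentially a measure-theoretic restatement of \autoref{thm:recursive_feasibility}, so the plan is to combine that theorem with the set inclusions of \autoref{prop:sras_set_characterization}. Concretely, I would show that the set of initial states in $\srasset$ to which \autoref{thm:recursive_feasibility} does \emph{not} apply is Lebesgue-null. Then I would observe that the guarantees of that theorem are already uniform over all task policies and all disturbance policies.

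\textbf{Step 1: isolate the exceptional set.} \autoref{thm:recursive_feasibility} covers every initial state in $\raset\setminus\mathcal Z_{\mathrm R}$. The exceptional set is therefore $E:=\srasset\setminus(\raset\setminus\mathcal Z_{\mathrm R})$. By the right inclusion $\srasset\subseteq\raset$ of \autoref{prop:sras_set_characterization}, any $\state\in E$ lies in $\raset$ but not in $\raset\setminus\mathcal Z_{\mathrm R}$, hence in $\mathcal Z_{\mathrm R}$. So $E\subseteq\mathcal Z_{\mathrm R}\cap\srasset\subseteq\mathcal Z_{\mathrm R}$.

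\textbf{Step 2: the exceptional set is null.} Under the hypothesis that $\mathcal Z_{\mathrm R}$ has Lebesgue measure zero, $E$ is a subset of a null set. It is therefore null, or measurable with measure zero under completeness of Lebesgue measure. This is consistent with \autoref{cor:almost_everywhere_sras_characterization}: there the symmetric difference $\raset\,\triangle\,\srasset$ is contained in $\mathcal Z_{\mathrm R}$ by the same two inclusions, and I would cite it for that identification. Consequently, almost every $\state_0\in\srasset$ belongs to $\raset\setminus\mathcal Z_{\mathrm R}$.

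\textbf{Step 3: transfer the guarantees.} For each such $\state_0$, \autoref{thm:recursive_feasibility} holds for an arbitrary $\policyTask$ and an arbitrary $\dpolicy$. It gives recursive feasibility of the filter, invariance of $\raset\setminus\mathcal Z_{\mathrm R}\subseteq\srasset$, and the stay-after-entry and \gls{sRAS} properties. The null exceptional set $E$ does not depend on $\policyTask$ or $\dpolicy$, so the quantifier order ``almost every $\state_0$, then for all $\policyTask,\dpolicy$'' is justified.

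The only step needing any care is this last point: the exceptional set must be chosen independently of the policies, so that ``almost every'' is not silently interchanged with ``for all''. That independence holds because $E$ is defined purely through $\raset$, $\stayset$, and $\reachval$. Everything else is immediate from the cited results.
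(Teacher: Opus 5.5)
Your proposal is correct and follows the same route as the paper. The paper gives no separate proof: it combines the inclusion $\srasset\subseteq\raset$ from \autoref{prop:sras_set_characterization} with the measure-zero hypothesis (via \autoref{cor:almost_everywhere_sras_characterization}), so the exceptional set $\srasset\setminus(\raset\setminus\mathcal Z_{\mathrm R})\subseteq\mathcal Z_{\mathrm R}$ is null, and then invokes \autoref{thm:recursive_feasibility}; your explicit check that this exceptional set does not depend on the task or disturbance policy usefully makes precise the quantifier order the paper leaves implicit.
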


\begin{corollary}[Conservative \glsentrytext{sRAS} Feasibility Preservation]
\label{cor:conservative_sRAS_certification}
Without the measure-zero condition, the filter still preserves \gls{sRAS} feasibility on $\raset\setminus\mathcal Z_{\mathrm R}\subseteq\srasset$, although this certified domain may be conservative relative to $\srasset$.
\end{corollary}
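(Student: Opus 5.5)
The statement to prove is the last corollary, Conservative sRAS Feasibility Preservation. It follows directly from the earlier results, and the plan is simply to notice that the measure-zero hypothesis is never used in \autoref{thm:recursive_feasibility}. First, I invoke \autoref{prop:sras_set_characterization}. Its inclusion $\raset\setminus\mathcal Z_{\mathrm R}\subseteq\srasset$ was proved without any assumption on $\mathcal Z_{\mathrm R}$. It relied only on the policy constructions from \eqref{eq:reach_value_trajectory} and \eqref{eq:stay_fallback}. So the certified domain $\raset\setminus\mathcal Z_{\mathrm R}$ is always contained in the maximal robust sRAS set.

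Second, I re-read the proof of \autoref{thm:recursive_feasibility} to confirm it uses only three ingredients:
\begin{itemize}
\item \autoref{lem:switched_q_feasibility} and \autoref{lem:switched_q_equivalence}, which hold pointwise on $\raset\setminus\stayset$ and on $\stayset$;
\item the strict positivity $\reachval(\state)>0$ on $\raset\setminus(\stayset\cup\mathcal Z_{\mathrm R})$, which comes from \eqref{eq:ra_set_value} and \eqref{eq:zero_level_set};
\item the property $\beta_{\mathrm R}(\rho)>0$ for $\rho>0$, which holds because $\beta_{\mathrm R}$ is class-$\classK$.
\end{itemize}
None of these needs a Lebesgue-measure statement. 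The induction therefore goes through unchanged. For every $\state_0\in\raset\setminus\mathcal Z_{\mathrm R}$ and every disturbance policy, the filter is recursively feasible. The trajectory also remains in $\raset\setminus\mathcal Z_{\mathrm R}\subseteq\srasset$, which is what sRAS feasibility preservation on that domain means.

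Third, I address conservativeness. The inclusion $\srasset\subseteq\raset$ from \autoref{prop:sras_set_characterization} shows that $\srasset\setminus(\raset\setminus\mathcal Z_{\mathrm R})\subseteq\mathcal Z_{\mathrm R}$. So the uncertified part of $\srasset$ is at most $\mathcal Z_{\mathrm R}$. Without the measure-zero condition this set can have positive measure, and the certified domain may then be a strict, conservative subset of $\srasset$. The statement only asserts that this \emph{may} happen, so no example is required.

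The only potential obstacle is ensuring that no step in \autoref{thm:recursive_feasibility} implicitly used \autoref{cor:almost_everywhere_sras_characterization}. A line-by-line check shows it did not: the measure-zero assumption enters only in \autoref{cor:almost_everywhere_maximality}.
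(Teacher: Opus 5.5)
Your proposal is correct and takes the same route as the paper. The paper gives no separate proof: the corollary is simply \autoref{thm:recursive_feasibility} combined with the unconditional inclusion $\raset\setminus\mathcal Z_{\mathrm R}\subseteq\srasset$ from \autoref{prop:sras_set_characterization}, neither of which uses the measure-zero hypothesis, and your bound $\srasset\setminus(\raset\setminus\mathcal Z_{\mathrm R})=\srasset\cap\mathcal Z_{\mathrm R}\subseteq\mathcal Z_{\mathrm R}$ correctly accounts for the possible conservativeness.
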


\section{Synthesis and Deployment of \glsentrytext{sRAS} \texorpdfstring{$Q$}{Q}-\glsentrytext{CBF}}
\label{sec:neural_synthesis}
The \gls{sRAS} $Q$-\gls{CBF} filter in \autoref{def:switched_filter} requires 1) access to $\reachval$, $\stayval$ and their lifted forms $\reachq$, $\stayq$, and 2) solving an inner minimization over disturbances.
We address both requirements via game-theoretic \gls{RL}~\cite{oh2026synthesis,nguyen2025gameplay,wang2024magics}: we sequentially learn the values and the corresponding fallbacks, and then train the best-response disturbances for runtime filtering.

\subsection{Sequential Value Synthesis}
\label{subsec:sequential_value_learning}
For each mode $m\in\{\mathrm S,\mathrm R\}$, we jointly train a neural critic $\mathcal Q_{\omega_m}$, a best-effort fallback $\pi^u_{\theta_m}(\cdot\mid\state)$, and an adversarial disturbance $\pi^d_{\psi_m}(\cdot\mid\state,\ctrl)$.
The disturbance retains an instantaneous informational advantage by observing and reacting to the control, following the $\max_\ctrl\min_\dstb$ game structure.
Writing the actors' deployment outputs as $\pi^u_{\theta_m}(\state)$ and $\pi^d_{\psi_m}(\state,\ctrl)$, these networks provide the value estimate
\begin{equation}
\label{eq:neural_value}
\widehat V_m(\state):=\mathcal Q_{\omega_m}\Bigl(\state,\pi^u_{\theta_m}(\state),\pi^d_{\psi_m}\bigl(\state,\pi^u_{\theta_m}(\state)\bigr)\Bigr).
\end{equation}
The learned fallback approximates \eqref{eq:stay_fallback} or \eqref{eq:reach_fallback}, while the disturbance approximates a locally worst-case response to it.

Using transitions $(\state,\ctrl,\dstb,\state')$ collected through \emph{black-box queries} of \eqref{eq:dynamics} and sampled from a replay buffer $\mathcal B_m$, we train the critic against the \emph{discounted} Isaacs target~\cite{oh2026synthesis,nguyen2025gameplay,wang2024magics}:
\begin{subequations}
\label{eq:neural_targets}
\begin{align}
\mathcal L_m^Q(\omega_m,&\theta_m,\psi_m):=\mathbb E\!\left[\left(\mathcal Q_{\omega_m}(\state,\ctrl,\dstb)-y_m\right)^2\right],\label{eq:neural_critic_loss}\\
y_{\mathrm S}&:=(1-\gamma)\staymargin(\state)+\gamma\min\{\staymargin(\state),q'_{\mathrm S}\},\label{eq:neural_stay_target}\\
y_{\mathrm R}&:=(1-\gamma)\min\{\hat\ell_{\mathrm R}(\state),\hat g_{\mathrm R}(\state)\}\notag\\
&\quad+\gamma\min\{\hat g_{\mathrm R}(\state),\max\{\hat\ell_{\mathrm R}(\state),q'_{\mathrm R}\}\},\label{eq:neural_reach_target}
\end{align}
\end{subequations}
where $\gamma\in(0,1)$ is the discount factor and $q'_m:=\mathcal Q_{\omega'_m}(\state',\ctrl',\dstb')$ is the output of a slowly updated target critic, with $\ctrl'\sim\pi^u_{\theta_m}(\cdot\mid\state')$ and $\dstb'\sim\pi^d_{\psi_m}(\cdot\mid\state',\ctrl')$.
We first train for the stay mode, then freeze $\widehat V_{\mathrm S}$ and train for the reach mode using $\hat\ell_{\mathrm R}:=\widehat V_{\mathrm S}$ and $\hat g_{\mathrm R}:=\min\{\margin,\max\{-\stagecostsafe,\hat\ell_{\mathrm R}\}\}$.

For each mode, we train the fallback to maximize the critic and the disturbance to minimize it, with entropy regularization encouraging exploration for both actors.
All updates use sampled transitions and neural network gradients, without differentiating through the dynamics.
In doing so, we adopt \gls{GDA} with \emph{finite timescale separation}: $\psi_m$ is updated on a faster timescale than $\theta_m$, allowing $\pi^d_{\psi_m}$ to track a locally best-effort worst-case response to the slowly evolving fallback $\pi^u_{\theta_m}$.
The underlying game-theoretic \gls{RL} framework guarantees local convergence to a minimax equilibrium in the policy space~\cite{wang2024magics}.

\subsection{Universal Worst-Case Response Disturbance Policy}
\label{subsec:worst_case_disturbance_synthesis}
While $\pi^d_{\psi_m}$ approximates a locally worst-case response to $\pi^u_{\theta_m}$, it need not do so for the \emph{filtered policy} from \autoref{def:switched_filter}.
For each mode, we therefore freeze $\omega_m$ and further train a copy of $\pi^d_{\psi_m}$ to minimize the critic $\mathcal Q_{\omega_m}$ over state--control pairs sampled using a family of diverse control policies.
The resulting \emph{universal best-effort worst-case response disturbance policy}, denoted by $\pi^d_{\tilde\psi_m}(\cdot\mid\state,\ctrl)$, is trained to approximate the pointwise minimizer of $\mathcal Q_{\omega_m}(\state,\ctrl,\cdot)$ across states and candidate controls, providing a learned surrogate for the inner disturbance minimization.
Using the deployment output of $\pi^d_{\tilde\psi_m}$, we replace the constraint in \autoref{def:switched_filter} with
\begin{equation}
\label{eq:neural_filter_constraint}
\mathcal Q_{\omega_m}\bigl(\state,\ctrl,\pi^d_{\tilde\psi_m}(\state,\ctrl)\bigr)\ge\beta_m\bigl(\widehat V_m(\state)\bigr).
\end{equation}
Evaluating \eqref{eq:neural_filter_constraint} requires only neural network evaluations, enabling runtime filtering on black-box systems without nested disturbance optimization, known dynamics, control- or disturbance-affine structure, or value derivatives.

\begin{remark}[Post-hoc Verification]
Guarantees in \autoref{sec:qcbf} do not automatically extend to learned values and policies.
Offline conformal prediction offers probabilistic guarantees for learned reachable sets~\cite{lin2024verification} and safety filters~\cite{hu2026permissive}.
\end{remark}

\begin{figure*}[t]
\centering
\vspace*{5pt}
\includegraphics[width=\textwidth]{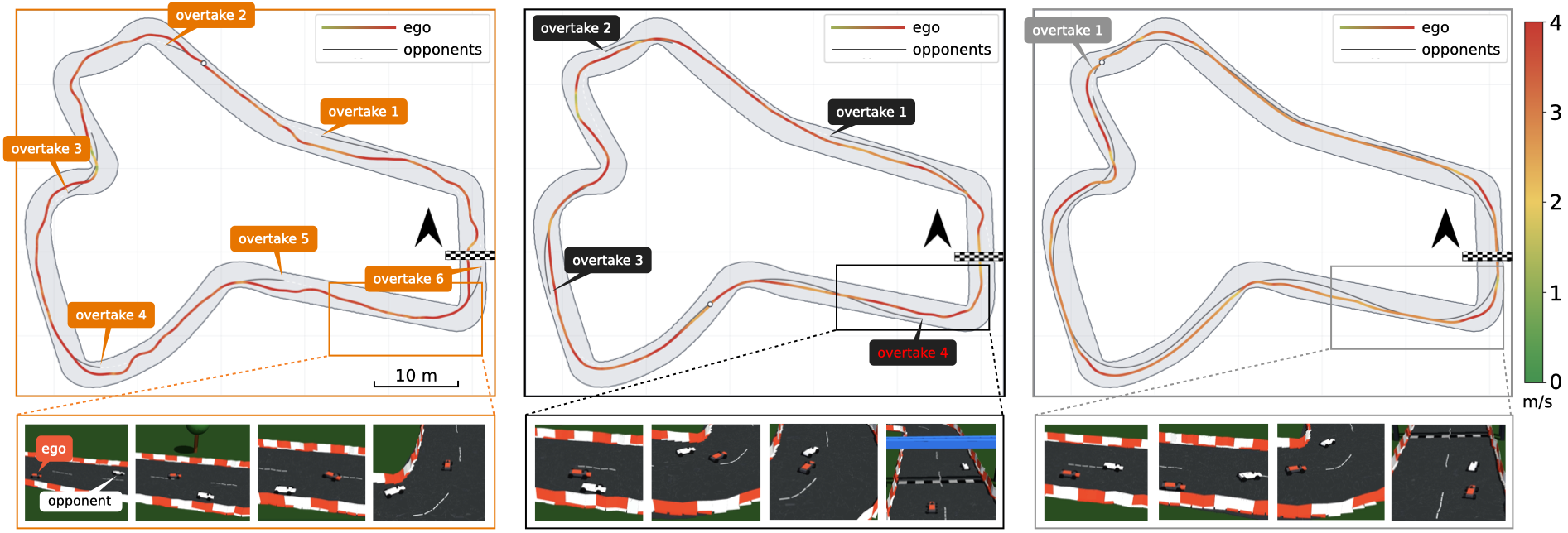}
\caption{Representative F1TENTH races, with ego trajectories colored by speed.
Our \sRAS~filter (left) maintains safety while enabling successful overtakes and lead retention, whereas \RA~(middle) can lose the lead after initially getting ahead and \AO~(right) remains blocked behind a slower opponent.}
\label{fig:f1_overtaking_circuit}
\par\medskip
\begin{minipage}[t]{\columnwidth}
\vspace{0pt}
\centering
\small
\setlength{\tabcolsep}{4pt}
\renewcommand{\arraystretch}{1.1}
\resizebox{\columnwidth}{!}{
\begin{tabular}{l|ccc}
\hline
Method & Safe Rate [\%] $\uparrow$ & Overtakes/Race $\uparrow$ & 2-Lap Time [s] $\downarrow$ \\
\hline
\AO & $\mathbf{98.2\,[97.2,\,98.9]}$ & $1.00\pm0.03$ & $111.40\pm3.32$ \\
\RA & $56.8\,[53.7,\,59.8]$ & $3.59\pm0.90$ & $116.43\pm8.74$ \\
\sRAS & $\mathbf{98.0\,[96.9,\,98.7]}^{\ddagger}$ & $\mathbf{9.63\pm0.99}^{\dagger\ddagger}$ & $\mathbf{97.37\pm2.87}^{\dagger\ddagger}$ \\
\hline
\end{tabular}
}
\par\smallskip
\refstepcounter{table}
\label{tab:f1_overtaking_results}
\noindent
\begin{minipage}{\columnwidth}
\textbf{TABLE~\Roman{table}:} F1TENTH results from $1000$ simulated races per filter.
Our \sRAS~filter achieves a $98.0\%$ safe rate with significantly more overtakes per race and shorter 2-lap times than both baselines.
Safe rates include $95\%$ Wilson confidence intervals; other metrics show mean $\pm$ standard deviation.
For \sRAS, $\dagger$ and $\ddagger$ indicate significant differences ($p<0.01$) from \AO~and \RA, respectively.
\end{minipage}
\end{minipage}\hfill%
\begin{minipage}[t]{\columnwidth}
\vspace{0pt}
\centering
\small
\setlength{\tabcolsep}{4pt}
\renewcommand{\arraystretch}{1.1}
\resizebox{\columnwidth}{!}{
\begin{tabular}{l|ccc}
\hline
Method & Safe Rate [\%] $\uparrow$ & Cross Rate [\%] $\uparrow$ & Success Rate [\%] $\uparrow$ \\
\hline
\AO & $85.6\,[83.3,\,87.6]$ & $0.8\,[0.4,\,1.6]$ & $0.7\,[0.3,\,1.4]$ \\
\RA & $63.4\,[60.4,\,66.3]$ & $87.4\,[85.2,\,89.3]$ & $63.3\,[60.3,\,66.2]$ \\
\sRAS & $\mathbf{86.9}\,[84.7,\,88.9]^{\ddagger}$ & $\mathbf{89.7}\,[87.7,\,91.4]^{\dagger}$ & $\mathbf{86.9}\,[84.7,\,88.9]^{\dagger\ddagger}$ \\
\hline
\end{tabular}
}
\par\smallskip
\refstepcounter{table}
\label{tab:go2_gap_results}
\noindent
\begin{minipage}{\columnwidth}
\textbf{TABLE~\Roman{table}:} Go2 gap-jumping results from $1000$ simulated episodes per filter.
Our \sRAS~filter achieves an $86.9\%$ success rate (percentage of episodes with safe crossing, safe landing, and continued safety), significantly higher than both baselines.
All metrics show $95\%$ Wilson confidence intervals.
For \sRAS, $\dagger$ and $\ddagger$ denote significant differences ($p<0.01$) from \AO~and \RA, respectively.
\end{minipage}
\end{minipage}
\vspace{-0.5cm}
\end{figure*}

\section{Experimental Results}
\label{sec:experiments}
In this section, we evaluate the proposed \gls{sRAS} filter on two robotic tasks: quadruped gap jumping in simulation and hardware, and simulated F1TENTH endurance racing.

\noindent \textbf{Baselines.}
We compare against Avoid-only and \gls{RA} baselines.
The Avoid-only filter uses an avoid-only value trained with safety margin $\margin$, whereas the \gls{RA} filter uses a reach--avoid value trained with safety margin $\margin$ and target margin $\stagecostsafe$.
Both baseline filters apply robust $Q$-\gls{CBF} interventions~\cite{oh2026synthesis}.
These comparisons assess the benefits of encoding target reaching and safe retention after first entry.

\subsection{F1TENTH Endurance Race}
\label{subsec:f1_experiment}

We first evaluate our \gls{sRAS} filter in a high-fidelity F1TENTH simulator without disturbances (singleton $\dset$), built in MuJoCo~\cite{todorov2012mujoco} based on \texttt{F1TENTH Gym}~\cite{o2020f1tenth}.
The ego must race for two laps, safely overtake as many opponents as possible, and minimize completion time.
An overtake counts as successful only after the ego gets ahead and retains the lead for $5\,\mathrm{s}$, after which the opponent is replaced by a new one spawned at a random location ahead of the ego.

\noindent \textbf{Setup.}
The safety margin $\margin(\state):=\min\{g_{\mathrm{wall}}(\state),g_{\mathrm{oppo}}(\state)\}$ is the minimum of the ego's signed clearance margins to the track boundary and opponent.
The target margin $\stagecostsafe(\state):=CP_{\mathrm{ego}}-CP_{\mathrm{opp}}-0.58\,\mathrm m$ is the difference in unwrapped centerline progress in meters, with a one-vehicle-length lead buffer.
We use a fixed model predictive path integral (MPPI) policy~\cite{williams2017model} as the task policy for both ego and opponent, with an additional filter applied to the ego.
The common $90$-dimensional base observation includes normalized LiDAR measurements, ego velocity and yaw rate, previous controls, and ego-centric opponent features.
The \gls{sRAS} filter additionally observes the normalized target margin in both modes and the frozen stay value in reach mode.
The control inputs are commanded steering angle and speed.

\noindent \textbf{Evaluation and Metrics.}
We evaluate all filters on the same $1000$ random seeds.
After a collision, the ego is held at the impact position for $3\,\mathrm{s}$ before respawning at the nearest track centerline point.
We report \textit{safe rate} (percentage of races completing two laps without collision), \textit{mean successful overtakes per race}, and \textit{mean two-lap completion time}.

\noindent \textbf{Results.}
Table~\ref{tab:f1_overtaking_results} reports the results, with significance assessed using exact McNemar tests for safe rate and paired $t$-tests for other metrics (Holm-adjusted $p<0.01$).
The \gls{sRAS} filter achieves a high safe rate similar to Avoid-only and significantly higher than \gls{RA}.
It also achieves significantly more successful overtakes and significantly shorter two-lap completion times than both baselines.
As Figure~\ref{fig:f1_overtaking_circuit} illustrates, \gls{sRAS} repeatedly passes opponents and retains the lead, whereas Avoid-only does not encode passing and remains blocked behind a slower opponent.
\gls{RA} does not require safe lead retention, so it can collide after briefly getting ahead or lose the lead and enter battles that slow down both vehicles.

\subsection{Quadruped Gap Jumping}
\label{subsec:go2_experiment}

We next evaluate the \gls{sRAS} filter on a $36$-D Unitree Go2 quadruped in hardware and in MuJoCo simulation~\cite{todorov2012mujoco}.
The robot must jump across a $25\,\mathrm{cm}$ gap, land safely on the far side, and remain safe afterward.
To test robustness against adversarial uncertainty realizations in simulation, we apply external forces up to $30\,\mathrm{N}$ in arbitrary directions at arbitrary torso locations, selected by the disturbance policy trained following \autoref{subsec:worst_case_disturbance_synthesis}.
Accounting for these disturbances also aims to mitigate the sim-to-real gap.

\noindent \textbf{Setup.}
The safety margin $\margin(\state)$ is the minimum of normalized margins for trunk clearance above local terrain, body tilt, and non-foot contact force.
The target margin $\stagecostsafe(\state)$ combines progress beyond the gap, foot-to-terrain proximity, uprightness, and a planar-speed margin modulated by foot proximity to encode an upright arrival near the ground on the far side.
In both simulation and hardware, we use a pure-pursuit walking task policy toward the far side of the gap.
The observation consists of a 5-frame history of body-frame angular velocity, projected gravity vector, velocity command, gait phase, joint positions and velocities, and a $17\times11$ local height scan.
The control inputs are $12$ joint-position targets relative to a nominal standing pose, tracked by a PD controller at $50\,\mathrm{Hz}$.

\noindent \textbf{Evaluation and Metrics.}
We evaluate each filter in $1000$ simulation episodes.
We report \textit{safe rate} (percentage avoiding failure), \textit{cross rate} (percentage reaching the far side, regardless of landing safety), and \textit{success rate} (percentage crossing and landing safely, then remaining safe until episode end).

\noindent \textbf{Simulation Results.}
Table~\ref{tab:go2_gap_results} reports simulation results with significance assessed using exact McNemar tests (Holm-adjusted $p<0.01$).
The Avoid-only filter does not encode crossing and rarely completes the jump.
Although \gls{RA} often reaches the far side, it does not require continued safety after target entry, leaving the robot vulnerable to failure during landing or subsequent stabilization.
Figure~\ref{fig:go2_gap_hardware} illustrates such failure modes.
Our \gls{sRAS} filter modifies actions from the walking policy to preserve the feasibility of safe crossing, landing, and continued safety afterward, and achieves a significantly higher success rate than both baselines.

\noindent \textbf{Hardware Results.}
Figure~\ref{fig:go2_gap_hardware} (top) shows the \gls{sRAS}-filtered Go2 reproducing the successful behavior observed in simulation: crossing, landing safely, and remaining safe afterward.

\section{Conclusion}
We presented an \gls{sRAS} $Q$-\gls{CBF} safety filter for high-dimensional black-box systems under bounded uncertainty.
For exact values and under a measure-zero condition, the filter is recursively feasible, preserves \gls{sRAS} feasibility from almost every winnable initial state, and keeps the system safely within the target after first entry against all admissible uncertainty realizations.
Reachability-based adversarial \gls{RL} and $Q$-function lifting enable scalable synthesis and deployment without known dynamics, affine structure, value derivatives, or hand-designed barriers.
We validated our method on two robotic tasks: quadruped gap jumping in simulation and hardware, and simulated F1TENTH racing.
The filter enabled safe crossing, landing, and continued safety for the quadruped, and safe overtaking and lead retention for the F1TENTH race, demonstrating the intended \gls{sRAS} behavior.

\section*{Acknowledgment}
Claude (Anthropic) and Codex (OpenAI) were used to generate code for the experiments in \autoref{sec:experiments}.

\printbibliography

\end{document}